\documentclass{article}

% For figures
%\usepackage[belowskip=-15pt]{caption}
\usepackage{graphicx,amssymb} % more modern
\usepackage{subfigure} 
%\usepackage[caption=false,font=footnotesize]{subfig}
% For citations
\usepackage{natbib}

% For algorithms
\usepackage{algorithm}
%\usepackage{algorithmic}

% As of 2011, we use the hyperref package to produce hyperlinks in the
% resulting PDF.  If this breaks your system, please commend out the
% following usepackage line and replace \usepackage{icml2012} with
% \usepackage[nohyperref]{icml2012} above.
%\usepackage{hyperref}

% Packages hyperref and algorithmic misbehave sometimes.  We can fix
% this with the following command.
%\newcommand{\theHalgorithm}{\arabic{algorithm}}

% Employ the following version of the ``usepackage'' statement for
% submitting the draft version of the paper for review.  This will set
% the note in the first column to ``Under review.  Do not distribute.''
%\usepackage[nohyperref]{icml2012} 
% Employ this version of the ``usepackage'' statement after the paper has
% been accepted, when creating the final version.  This will set the
% note in the first column to ``Appearing in''
\usepackage[accepted]{icml2012}

% The \icmltitle you define below is probably too long as a header.
% Therefore, a short form for the running title is supplied here:
%\icmltitlerunning{Submission and Formatting Instructions for ICML 2012}

% ------------------------------------------------------------
% ------------------------------------------------------------

% Ule's latex default settings
% simply include per 'input' in the current latex file 

\typeout{Using Ule's latex defaults!}

% ------------------------------------------------------------
% ------------------------------------------------------------

% ------------------------------------------------------------
% packages:
% ------------------------------------------------------------

% mathematical stuff: 
\usepackage{amssymb}
\usepackage{amsmath}
\allowdisplaybreaks[3] % page breaks inside eqnarray, see ams doku  
%\usepackage{bbold} % used in definition of \charfct 
%\usepackage[mathcal]{euscript} % curlier version of mathcal symbols

% for figures: 

%\usepackage{flafter} % no figure before the place it is defined 
%\PassOptionsToPackage{pdftex}{graphicx} %flafter loads graphicx, but
                                %without option. will lead to option
                                %clash with graphicx package unless
                                %this line is put in ... 

%\usepackage[pdftex]{graphicx}
%\DeclareGraphicsExtensions{.pdf}

% for the layout and general stuff

% put in paper directly if necessary
% \usepackage{geometry}
% \geometry{verbose,a4paper,tmargin=2cm,bmargin=3cm,lmargin=3cm,rmargin=3cm}

\usepackage{fancyhdr} % for nice head-and footlines
\usepackage{ifthen} % for \ifthenelse
\usepackage{layout} 
\usepackage{xspace}% for nice spaces after macros 
%\usepackage{afterpage}% to flush floats nicely; use
                      % \afterpage{\clearpage} to flush all figures
                      % but without starting a new page
\usepackage{url}
\usepackage{textcomp} %for the euro symbol; \texteuro

% for smileys and lightnings: 
\usepackage{wasysym} 
%has to be included
% after amsmath because of a bug! 
% for other wasysym stuff see below. 

% to show labels:
%\usepackage[notref,notcite]{showkeys_tiny}

% ------------------------------------------------------------
% Language - force german or english: 
% ------------------------------------------------------------

%\usepackage[english]{babel}
%\usepackage{german}
\usepackage[latin1]{inputenc} %for 8bit-umlaute produced by emacs

\def\ba#1\ea{\begin{align*}#1\end{align*}} %\ba = \begin{algin*}, \ea = \end{align*}
\def\banum#1\eanum{\begin{align}#1\end{align}} %\banum = \begin{algin}, \eanum

\newcommand{\bi}{\begin{itemize}}
\newcommand{\ei}{\end{itemize}}
\newcommand{\be}{\begin{enumerate}}
\newcommand{\ee}{\end{enumerate}}
\newcommand{\bc}{\begin{center}}
\newcommand{\ec}{\end{center}}

\def\bv#1\ev{\begin{verbatim}#1\end{verbatim}} %\ba = \begin{algin*}, \ea = \end{align*}

%\newcommand{\nn}{\nonumber}

% THEOREM ENVIRONMENTS: 

% now distinguish whether llncs is used or not. if yes, then the
% following commands are already defined, so do not define them again
% (otherwise error message): 
% might be better to look for amsthm, where those things are
% defined. this is sometimes also used in other packages, like beamer

%define theorem environments:
\ifx\lemma\undefined
\typeout{ULE: LEMMA IS UNDEF, THUS DEF IT:}
%define theorem and proof environments:
 
\newtheorem{theorem}{Theorem} 
\newtheorem{lemma}[theorem]{Lemma} 
\newtheorem{proposition}[theorem]{Proposition} 

\newtheorem{corollary}[theorem]{Corollary}
\newtheorem{definition}[theorem]{Definition}

 %other counter than rest
%
\else
\typeout{ULE: LEMMA IS ALREADY DEFINED }
\fi

% hack for a theorem without number (nn = no number): 
% I define a theorem with a new counter theoremnn. Then I redefine the 
% command which writes the number of this counter to write nothing. 

% hack for a proposition with a given number: 

% to apply this, set the dummycounter to the value you want, eg 
% \setcounter{dummycounter}{\ref{prop-eps-graph}}
% \addtocounter{dummycounter}{-1} %because will be increased when calling propcounter
% Then call the propcounter. 

% comments in calculations: 
%\newcommand{\comment}[1]{\hspace{1cm}(\text{#1})}

% PROOF ENVIRONMENTS: 

 % unfilled box
  % filled black box 
\newcommand{\ulesqed}{\hfill\smiley}

\ifx\proof\undefined
\typeout{ULE: PROOF IS UNDEF, THUS DEF IT:}
\newenvironment{proof}{\par\noindent{\em Proof. }}{\ulesqed}
\else
\typeout{ULE: PROOF IS ALREADY DEFINED. }
\fi

% ------------------------------------------------------------
% colors
% ------------------------------------------------------------

%\input{ules_latex_colors.tex}

\definecolor{ulesred}{rgb}{0.7,0,0}

% ------------------------------------------------------------
% dealing with pseudocode: 
% ------------------------------------------------------------
% \usepackage[noend]{algorithmic}

% \algsetup{linenosize=\scriptsize\slidegray}
% \algsetup{linenodelimiter=}
% \renewcommand{\algorithmiccomment}[1]{{\small\blue \# #1}}
% \renewcommand{\algorithmicthen}{} % don't want then statements
% \renewcommand{\algorithmicdo}{} % don't want do statements
% \newcommand{\balg}{\begin{algorithmic}[1]}
% \newcommand{\ealg}{\end{algorithmic}}
% \newcommand{\FORL}[2]{% for loop in one line
% \algorithmicfor\ {(#1),}\ \algorithmicdo\ {#2,} \;\algorithmicend\ \algorithmicfor\\%
% }
% \newcommand{\IFL}[2]{% for loop in one line
% \algorithmicif\ {(#1),}\ \algorithmicdo\ {#2,} \;\algorithmicend\ \algorithmicif\\%
% }
% \newcommand{\FORALLL}[2]{% forall loop in one line
% \algorithmicforall\ {(#1),}\ \algorithmicdo\ {#2,} \;\algorithmicend\ \algorithmicfor\\%
% }

% ALGORITHM ENVIRONMENT BY FOOT: 

% Usage: \ulesalgorithm{Name of alg}{contents of alg}, see NIPS 2007
% paper...
% nice version by foot is to use: 
% \begin{quote}{ \tt
% ...
% }
% \end{quote}

% a similar thing: just moves text a bit to the left (similar to the
% quote environment, but no margin at the right side): 
% use it with \ulesquote{ ... } or perhaps \ulesquote{\texttt ... }
% see stability overview paper at ftml for an example

% ------------------------------------------------------------
% math letters: 
% ------------------------------------------------------------

\newcommand{\RR}{\mathbb{R}}

%%  low level versions:  
%%\newcommand{\R}{I\!\!R}
%%\newcommand{\RR}{I\!\!R}
%%\newcommand{\Nat}{I\!\!N} 
%%\newcommand{\CC}{I\!\!\!\!C}

% for compatibility reasons with other people: 
\let\R\undefined %sometimes it is defined as something I don't know
\newcommand{\R}{\RR}

% note that to generate more curly mathcal symbols i invoked the eucal
% package in the beginning

\newcommand{\Xcal}{\mathcal{X}}

% ------------------------------------------------------------
% math operators: 
%if they should have limits (such as a sum) use the * version
% ------------------------------------------------------------
% standard stuff: 

 % * falls subscript

%\DeclareMathOperator{\aufspan}{span} replace this by linspan ...

% trace
% trace

 % support of a function

 %range of a funciton
 %volume

%more special stuff: 

%\DeclareMathOperator{\emp}{emp}

% Norms: Attention, changed. 
% now use them with standard command with curly brackets: 
% \norm{something}

% min and max

% knn stuff: 
\newcommand{\knn}{\text{kNN}}

% Graph Laplacians: 

% and, for lazyness, the same with lower case

% and same with n: 

%

%

% Graph cuts and objective functions: 

%stability
% stabilty 
% stability
% instability
% instability

% ------------------------------------------------------------
% misc math stuff: 
% ------------------------------------------------------------

% various symbols: 

\newcommand{\eps}{\ensuremath{\varepsilon}}
\renewcommand{\epsilon}{\ensuremath{\varepsilon}}
\renewcommand{\phi}{\ensuremath{\varphi}}
\wasyfamily
\DeclareFontShape{U}{wasy}{b}{n}{ <-10> ssub * wasy/m/n
<10> <10.95> <12> <14.4> <17.28> <20.74> <24.88>wasyb10 }{}

% to produce arbitrary large smileys with wasysym: 
% add the font sizes i want in the end (here: 40, 50, and 60). Then
% can be called with {\fontsize{40}{1} \smiley}: 
\DeclareFontShape{U}{wasy}{m}{n}{ <5> <6> <7> <8> <9> gen * wasy
<10> <10.95> <12> <14.4> <17.28> <20.74> <24.88> <35> <40> <50> <60> wasy10  }{}

% date command (in case one wants to have it consistent over several
% documents, eg in application documents} 

\renewcommand{\ulesqed}{\mbox{}\hfill$\Box$}
\newcommand{\SP}{\mathcal{SP}}
%\addtolength{\belowcaptionskip}{-15pt}

\begin{document} 

\twocolumn[
\icmltitle{Shortest path distance in random $k$-nearest neighbor graphs}
\icmlauthor{Morteza Alamgir$^{1}$}{morteza@tuebingen.mpg.de}
\icmlauthor{Ulrike von Luxburg$^{1,2}$}{ulrike.luxburg@tuebingen.mpg.de}
\icmladdress{ $^{1}$ Max Planck Institute for Intelligent Systems,
 T\"{u}bingen, Germany\\
$^{2}$ Department of Computer Science, University of Hamburg, Germany}
%\icmladdress{Their Fantastic Institute}

% You may provide any keywords that you 
% find helpful for describing your paper; these are used to populate 
% the "keywords" metadata in the PDF but will not be shown in the document
\icmlkeywords{shortest path, nearest neighbor graph, density based distance, limit distance}
\vskip 0.2in
]

\begin{abstract} Consider a weighted or unweighted $k$-nearest
  neighbor graph that has been built on $n$ data points drawn randomly
  according to some density $p$ on $\R^d$. We study the convergence of
  the shortest path distance in such graphs as the sample size 
  tends to infinity.  We prove that for unweighted $\knn$ graphs, this
  distance converges to an unpleasant distance function on the
  underlying space whose properties are detrimental to machine
  learning. We also study the behavior of the shortest path distance in weighted
  $\knn$ graphs.  
\end{abstract} 

\section{Introduction}
\label{sec:int}
The shortest path distance is the most fundamental distance function between
vertices in a graph, and it is widely used in computer science and
machine learning. In this paper we want to understand the geometry
induced by the shortest path distance in
randomly generated geometric graphs like $k$-nearest neighbor graphs.

Consider a neighborhood graph $G$ built from an i.i.d. sample $X_1, ...,
X_n$ drawn according to some density $p$ on $ \Xcal \subset \R^d$ (for exact
definitions see Section \ref{sec:def}). Assume that the sample size $n$ goes to infinity. Two questions arise about the behavior of the shortest path distance between fixed points in this graph:

1. \textbf{Weight assignment:} Given a distance measure $D$
on $\Xcal$, how can we assign edge weights such that the 
shortest path distance in the graph converges to $D$? 

2. \textbf{Limit distance:} Given a function $h$ that assigns weights
of the form $h(\|X_i - X_j\|)$ to edges in $G$, what is the
limit of the shortest path distance in this weighted graph as $n \to \infty$?

%We ask the question whether the
%shortest path distance between fixed points in the graph converges to a
%meaningful distance on $\Xcal \subset \R^d$ as the sample size $n$ goes to infinity.

The first question has already been studied in some special
cases. \citet{TenSilLan00withproofs} discuss the case of $\eps$- and
$\knn$ graphs when $p$ is \textit{uniform} and $D$ is the geodesic
distance. \citet{Sajama2005} extend these results to $\eps$-graphs
from a general density $p$ by introducing edge weights that depend on an
explicit estimate of the underlying density. In a recent preprint, \citet{Hwang2012}
consider completely connected graphs whose vertices come from a
general density $p$ and whose edge weights  are powers
of distances. 

%In particular they assign edge weights depending on the Euclidean distance of the underlying points and an estimate of the density.
%In particular, they consider $\eps$-graphs where the edges have weights  
%The authors then show
%that the shortest path distance in this graph converges to a line
%integral in $\Xcal$ (see Section \ref{sec:def} for more details).  
%The results
%of both \citet{TenSilLan00withproofs} and \citet{Sajama2005} conform to the natural intuition in the sense that they prove a result that is not surprising. 

There is little work regarding
the second question. \citet{TenSilLan00withproofs} answer the question
for a very special case with $h(x)=x$ and uniform
$p$. \citet{Hwang2012} study the case $h(x)=x^a$, $a>1$ for arbitrary
density $p$.

We have a more general point of view.  In Section
\ref{sec:sp_weighted} we show that depending on properties of the
function $h(x)$, the shortest path distance operates in different
regimes, and we find the limit of the shortest path distance for
particular function classes of $h(x)$. Our method also reveals a
direct way to answer the first question without explicit density
estimation.

An interesting special case is the unweighted $\knn$ graph, which
corresponds to the constant weight function
$h(x)=1$. %We compare the shortest path distance in the unweighted and Euclidean weighted $\knn$-graphs. In particular we see that they measure completely different quantities.
We show that the shortest path distance on unweighted $\knn$-graphs
converges to a limit distance on $\Xcal$ that does {\em not} conform
to the natural intuition and induces a geometry on $\Xcal$ that can be
detrimental for machine learning applications.

%While $\eps$-graphs are appealing for theorists because of their
%mathematical simplicity, most machine learning practitioners prefer to
%use $\knn$ graphs  because they are much more
%robust.  
% This case is harder to analyze than
% $\eps$-graphs: points in low density regions are
% connected by ``longer'' edges than points in high density regions, and
% these random edge lengths have to be carried through all proofs. 
%We study the case of $k$-nearest neighbor graphs in Section \ref{sec:sp_unweighted}. 

Our results have implications for many machine learning algorithms,
see Section \ref{sec:conseq} for more discussion. (1) The shortest paths based
on unweighted $\knn$ graphs prefer to go through low density regions, and
they even accept large detours if this avoids passing through high
density regions (see Figure \ref{fig:path1} for an
illustration). This is exactly the opposite of what we would
like to achieve in most applications. %, and moreover this behavior is very unstable because
%the distance puts its focus on the low-density regions which are
%sparsely sampled.
%
(2) For manifold learning algorithms like Isomap, unweighted $\knn$
graphs introduce a
fundamental bias that leads to huge distortions in
the estimated manifold structure (see Figure \ref{fig:isomap_knn} for
an illustration). 
%Additionally to describing this phenomenon, NOTHING NEW.ISOMAP DO THE SAME
%we show how this behavior can be
%avoided by using particular edge weights in the $\knn$ graphs.
%
(3) In the area of semi-supervised learning, a standard approach is to
construct a graph on the  sample points, then compute
a distance between vertices of the graph, and finally use a standard
distance-based classifier % like the $k$-nearest neighbor classifier 
to label the unlabeled points (e.g., \citealp{Sajama2005} and \citealp{Bijral11}). 
The crucial property exploited in this
approach is that distances between points should be small if they are
in the same high-density region. Shortest path distances in unweighted
$\knn$ graphs and their limit distances do exactly the opposite, so
they can be misleading for this approach.

%From a larger point of view, the results presented in this paper are
%one important step in a larger research programme. Our aim is to
%understand what type of graph preserves what kind of information about
%the density and geometry of the underlying data distribution. The
%convergence of the shortest path distance is an important piece in
%this puzzle. 

\begin{figure}[t]
\centering
\includegraphics[trim = 30mm 20mm 40mm 13mm, clip,
width=0.7\columnwidth,height=0.15\textheight]{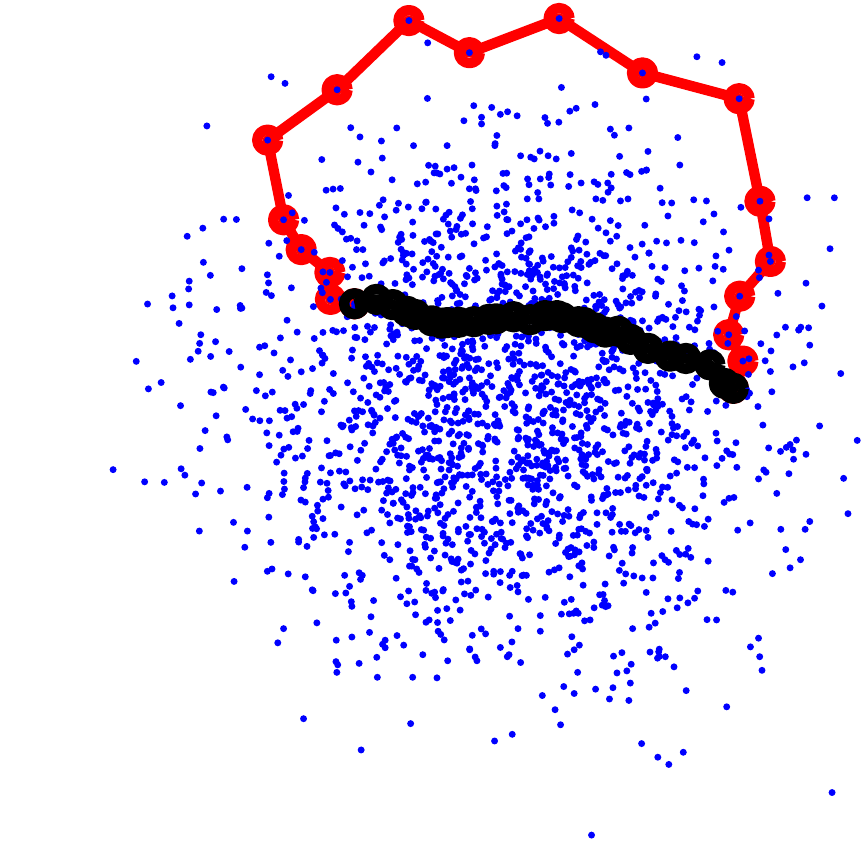}
\vspace{-10pt}
\caption{The shortest path based on an  unweighted (red) and Euclidean
  weighted (black) $\knn$ graph.}
\label{fig:path1}
\end{figure}

\begin{figure}
	\begin{minipage}{.49\linewidth}
		\includegraphics[width=\columnwidth]{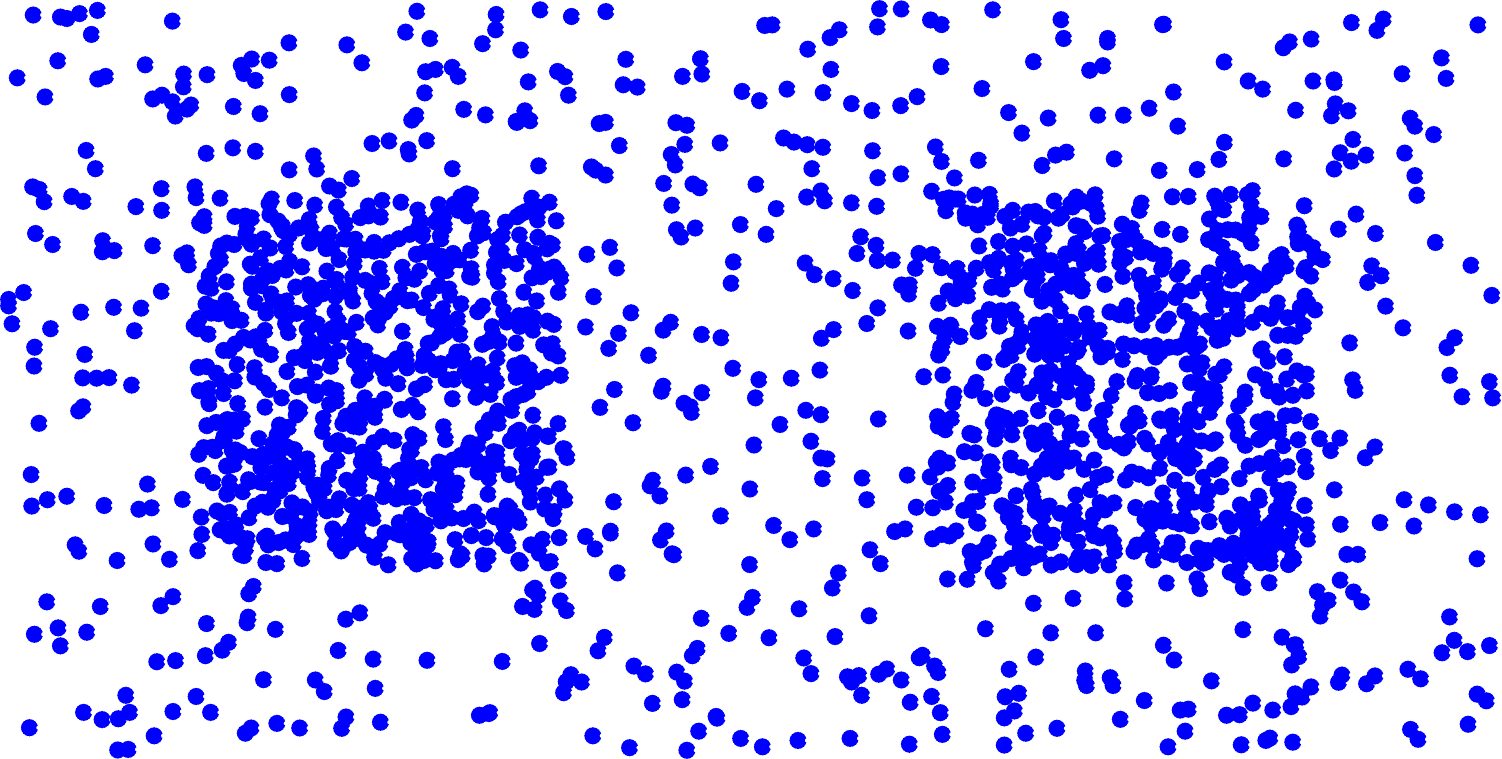}
	\end{minipage}
	\begin{minipage}{.49\linewidth}
		\includegraphics[width=\columnwidth]{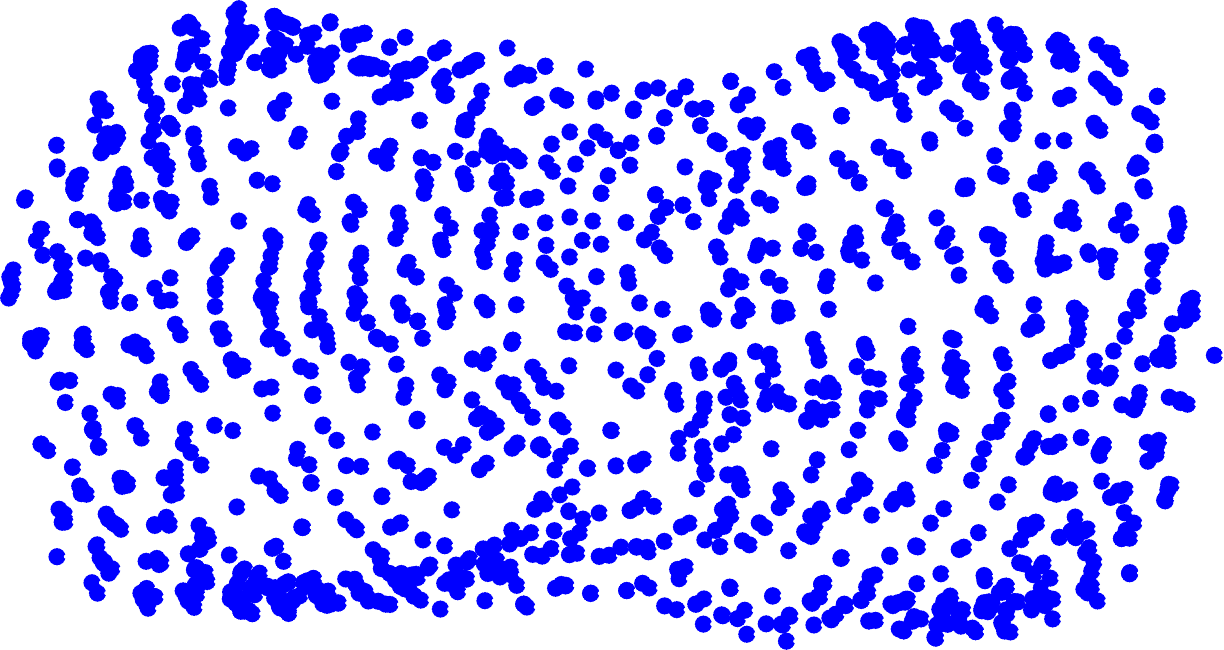}
	\end{minipage}
    \vspace{-10pt}
	\caption{Original data (left) and its Isomap
          reconstruction based on an unweighted $\knn$ graph (right).}
	\vspace{-10pt}
	\label{fig:isomap_knn}
\end{figure}

% \begin{figure*}[t]   
% 	\centerline
% 	{
% 		\subfigure[]{\includegraphics[trim = 30mm 20mm 40mm 13mm, clip, width=0.15\textwidth]{figure/sp_gaussian.pdf}}
% 		\hfill
% 		\subfigure[]{\includegraphics[width=0.35\textwidth]{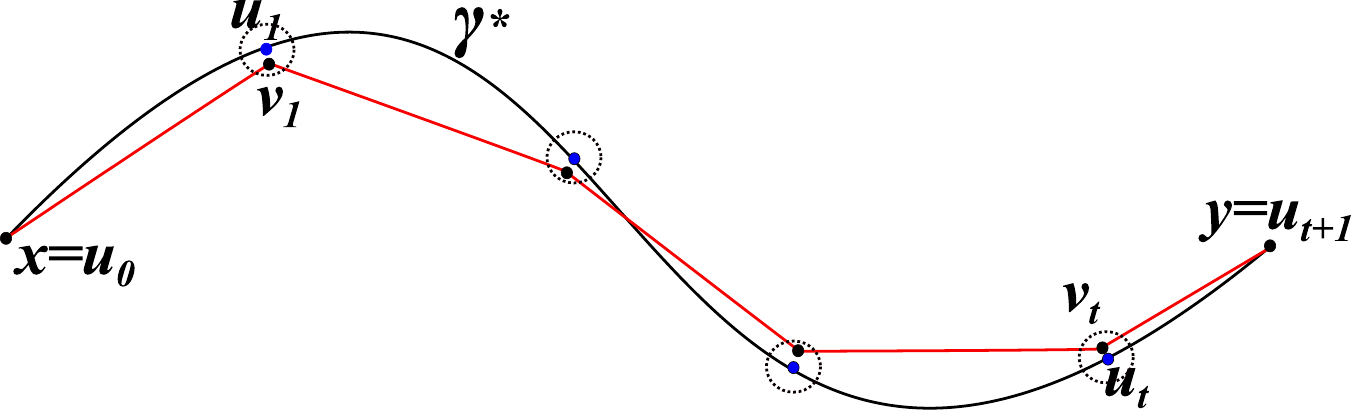}}
% 		\hfill
% 		\subfigure[]{\includegraphics[width=0.35\textwidth]{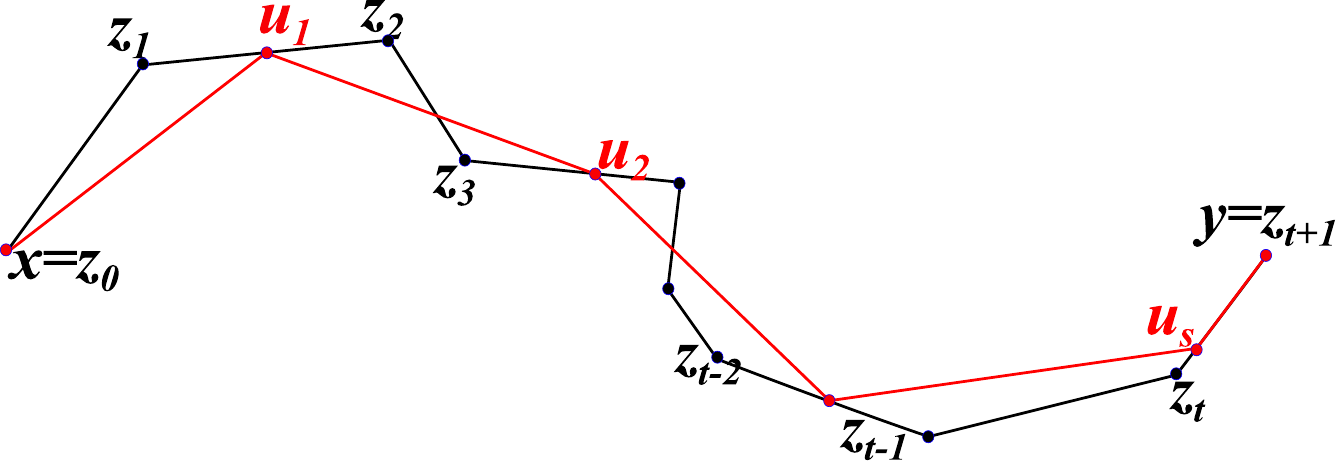}} 
% 	}
% 	 \vspace{-10pt}
%   \caption{\textbf{a)} The shortest path in unweighted (red) and Euclidean weighted (black) $\knn$ graph.
% 	\textbf{b,c)} Path construction}
% 	 \vspace{-10pt}
%  \label{fig:path}
% \end{figure*}

\section{Basic definitions}
\label{sec:def}

%XXX I REALIZED THAT WE CANNOT CHOOSE X AS AN OPEN SET. THEN THE
%SHORTEST PATH DOES NOT NECESSARILY EXIST (BECAUSE THE SHORTEST PATH
%WOULD GO ALONG THE BOUNDARY). SO PLEASE THINK ABOUT A WAY TO STATE ALL
%STATEMENTS FOR A CLOSED SET. SOMETHING ALONG THE LINE THAT WE ONLY
%CONSIDER INTERIOR POINTS WHOSE SHORTEST PATH DOES NOT TOUCH THE
%BOUNDARY. 

Consider a closed, connected subset $\Xcal \subseteq \mathbb{R}^d$ that
is endowed with a density function $p$ with respect to the Lebesgue
measure. For the ease of presentation we assume for the rest of the
paper that the density $p$ is Lipschitz continuous with Lipschitz
constant $L$ and bounded away from $0$ by $p_{\min} > 0$. 
%WE CAN STATE THE THEOREM FOR  POINTS THAT THEIR SHORTEST PAST IS IN THE INTERIOR OF THE SET X.  CAN WE REMOVE ASSUMPTIONS FROM X, AND SAY THAT OUR THEOREM HOLDS FOR PAIR OF POINTS IN X THAT THEIR SHORTEST PATH IS INTERIOR? (BY INTERIOR I MEAN THAT THERE EXIST A GLOBAL EPSILON SUCH THAT EPSILON BALLS AROUND EACH POINT ON THE PATH IS COMPLETELY IN X )
%
%(DO WE  NEED
%CONNECTEDNESS OF X? PLEASE THINK ABOUT IT.  IT WOULD BE EASIER IF NOT. OTHERWISE WE MIGHT NEED MORE
%THAN CONNECTEDNESS, WE MIGHT NEED THAT THE $\SP$ACE IS CONNECTED ``NOT
%JUST BY A LINE'' BUT BY A SET OF A CERTAIN THIKNESS, BECAUSE LATER WE
%DO NOT WANT TO CONSIDER PATHS ON THE BOUNDARY) 
% 
To simplify
notation later on, we define the shorthand $q(x) := (p(x))^{1/d}$. % DO
%WE NEED Q OFTEN? IF NOT, I'D RATHER HAVE $p(x))^{1/d}$ INSTEAD

We will
consider different metrics on $\Xcal$.  A ball with respect to a
particular metric $D$ in $\Xcal$ will be written as $ B(x,r,D) :=
\{y \in \Xcal \mid D(x,y) \leq r \}$. We denote the Euclidean volume
of the unit ball in $\mathbb{R}^d$ by $\eta_d$.

Assume the finite dataset $X_1,...,X_n$ has been drawn i.i.d according
to $p$. We build a geometric graph $G = (V,E)$ that has the data
points as vertices and connects vertices that are close. Specifically,
for the $\knn$ graph we connect $X_i$ with $X_j$ if
$X_i$ is among the $k$ nearest neighbors of $X_j$ or vice versa. For
the $\epsilon$-graph, we connect $X_i$ and $X_j$ whenever their Euclidean
distance satisfies $\|X_i - X_j\| \leq \epsilon$. In this paper, all graphs
are undirected, but might carry edge weights $w_{ij} \geq 0$.  In
unweighted graphs, we define the length of a path by its number of
edges, in weighted graphs we define the length of a path by the sum of
the edge weights along the path. In both cases, the shortest path ($\SP$)
distance $D_{sp}(x,y)$ between two vertices $x,y \in V$ is  the
length of the shortest path connecting them. 

%To study the limit of the $\SP$ distance in $\knn$ graphs, we
%need to define corresponding distance functions on the underlying
%space $\Xcal$. 
Let $f$ be a positive continuous scalar function defined on
$\Xcal$. For a given path $\gamma$ in $\Xcal$ that connects
$x$ with $y$ and is parameterized by $t$, we define the
$f$-length of the path as
\[
D_{f,\gamma} = \int_{\gamma}{f(\gamma(t))|\gamma'(t)| dt}.
\]
This expression is also known as the \textit{line integral} along $\gamma$ with
respect to $f$. The $f$-geodesic path between $x$ and $y$ is the path
with minimum $f$-length.

The $f$-length of the geodesic path is called the $f$-distance
between $x$ and $y$. We denote it by $D_{f}(x,y)$. If $f(x)$ is a
function of the density $p$ at $x$, then the $f$-distance is sometimes called
a {density based distance} \cite{Sajama2005}. %Such distances have
%been considered several times in the machine learning literature (see \citet{BouChaHei03} and \citet{Bijral11}). 

%PLEASE USE THE NATBIB CITATION STYLE WE ALWAYS HAD IN THE OTHER PAPERS
%AS WELL. 

The $f$-distance on $\Xcal$ is a metric, and
in particular it satisfies the
triangle inequality. Another useful property is that for a
point $u$ on the $f$-geodesic path between $x$ and $y$ we have
$D_f(x,y)=D_f(x,u)+D_f(u,y)$.

The function $f$ determines the behavior of the $f$-distance. When
$f(x)$ is a monotonically decreasing function of density $p(x)$,
passing through a high density region will cost less than passing
through a low density region. It works the other way round when $f$ is a monotonically increasing function of density. %The $f$-geodesic path will prefer passing through high(low) density regions based on the gradient of $f$. 
A constant function does not impose any preference between low and high density regions.%, so the Riemannian geodesic path is the preferred one.

The main purpose of this paper is to study the relationship between the $\SP$ distance in various geometric graphs and particular
$f$-distances on $\Xcal$. For example, in Section
\ref{sec:sp_unweighted} we show that the $\SP$ distance in unweighted
$\knn$ graphs converges to the $f$-distance with $f(x)
= p(x)^{1/d}$.

In the rest of the paper, all statements refer to points
$x$ and $y$ in the interior of $\Xcal$ such that their $f$-geodesic
path is bounded away from the boundary of $\Xcal$. 

\section{Shortest paths in unweighted graphs}
\label{sec:sp_unweighted}

In this section we study the behavior of the shortest path distance in the family of \textit{unweighted} $\knn$ graphs. %Access to more samples will increase the shortest path distance. 
%Getting a convergence result on unweighted $\SP$ distance will need a proper normalization. 
We show that the rescaled graph $\SP$ distance converges to the $q$-distance in the original space $\Xcal$.  

%%%%%%%%%%%%%%%%%%%%%%%%%%%%%%%%%%%%%%%%%%%%%%%%%%%%%%%%%%%%%%%%
%%%%%%%%%%%%%%%%%%%%%%%%%%%%%%%%%%%%%%%%%%%%%%%%%%%%%%%%%%%%%%%%
\begin{theorem}[$\SP$ limit in unweighted $\knn$ graphs]
\label{th:main} Consider the unweighted $\knn$ graph $G_n$ based on the
i.i.d. sample $X_1, ..., X_n \in \Xcal$ from the density $p$. Choose $\lambda$ and $a$ such that 
\[
\lambda \geq \frac{4 L}{\eta_d^{1/d} p_{\min}^{1+1/d} } \Big(\frac{k}{n}\Big)^{1/d} 
\; , \;
a < 1 - \log_{k}\Big( 4^d(1+\lambda)^2 \Big).
\]
Fix two points $x=X_i$ and $y=X_j$. 
Then there exist $e_1(\lambda, k)$,$e_2(\lambda, k, n)$,$e_3(\lambda)$ (see below for explicit definitions) such that with probability at least $1 - 3 e_3 n\exp(-\lambda^2 k^a/6) $
we have 
\[
e_1 D_q(x,y)
\leq 
e_2 D_{sp}(x,y) 
\leq
 D_q(x,y) - e_2.
\]
Moreover if $n\to \infty$, $k\to \infty$, $k/n \rightarrow 0$, $\lambda \rightarrow 0 $ and
$\lambda^2 k^a / \log(n)  \rightarrow \infty$, then the probability converges to $1$ and $(k / (\eta_d n ))^{1/d} D_{sp}(x,y) $ converges to $D_q(x,y)$ in probability.
\end{theorem}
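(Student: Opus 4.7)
The intuition is that in an unweighted $\knn$ graph, each edge incident to a vertex $u$ covers a Euclidean distance at most $r_k(u)$, the $k$-th nearest neighbor distance at $u$. Standard concentration yields $r_k(u) \approx (k/(n\eta_d p(u)))^{1/d}$, so one edge locally traverses a $q$-length of roughly $(k/(n\eta_d))^{1/d}$. Since $D_{sp}$ counts edges, the shortest path maximizes per-edge Euclidean progress, which in $q$-weighted arc length approximates the $q$-geodesic in $\Xcal$. The plan is therefore to prove matching upper and lower bounds on $D_{sp}(x,y)$, each accurate up to a factor $1\pm O(\lambda)$, and then take limits as $\lambda\to 0$.

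\textbf{Upper bound (constructing a short graph path).} Let $\gamma^\star$ denote the $q$-geodesic from $x$ to $y$, which by hypothesis stays in the interior of $\Xcal$. First I would subdivide $\gamma^\star$ into consecutive arcs of $q$-length $\approx (1-\lambda)(k/(n\eta_d))^{1/d}$; by Lipschitz continuity of $p$ each arc has Euclidean length close to $(1-\lambda) r_k^\star(u)$, where $r_k^\star(u):=(k/(n\eta_d p(u)))^{1/d}$. At every subdivision point $u$, a Chernoff bound on the binomial count $|B(u,r)\cap\{X_i\}|$, whose expectation is $\approx np(u)\eta_d r^d$, shows that a ball of radius $(1-\lambda)^{1/d} r_k^\star(u)$ contains at least one data point with high probability. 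Choosing such a sample point at each subdivision and invoking the concentration of $r_k$ shows that consecutive chosen points lie within each other's $\knn$ neighborhood, hence are connected in $G_n$. The resulting graph path has length at most the number of subdivisions, which gives $D_{sp}(x,y) \leq (1+O(\lambda))\,(n\eta_d/k)^{1/d} D_q(x,y)$.

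\textbf{Lower bound (controlling arbitrary graph paths).} For any path $x=v_0,v_1,\ldots,v_\ell=y$ in $G_n$, the triangle inequality for $D_q$ yields $D_q(x,y) \leq \sum_{i=1}^\ell D_q(v_{i-1},v_i)$. Because $p$ is Lipschitz and each segment is short, $D_q(v_{i-1},v_i) \leq (1+O(\lambda))\, q(v_{i-1})\|v_{i-1}-v_i\|$. Since the edge $(v_{i-1},v_i)$ exists in the $\knn$ graph, $\|v_{i-1}-v_i\| \leq \max(r_k(v_{i-1}),r_k(v_i)) \leq (1+\lambda)^{1/d} r_k^\star(v_{i-1})$ with high probability, so each summand is bounded by $(1+O(\lambda))(k/(n\eta_d))^{1/d}$. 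Summing yields $\ell \geq (1-O(\lambda))\,(n\eta_d/k)^{1/d} D_q(x,y)$, which is the desired lower bound on $D_{sp}(x,y)$.

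\textbf{Concentration and main obstacle.} Both halves rely on a two-sided concentration statement for $r_k(u)$ around $r_k^\star(u)$ that holds uniformly over the relevant vertices. One applies a Bernstein tail bound to binomial counts in balls of radius $(1\pm\lambda)^{1/d} r_k^\star(u)$, combined with Lipschitz control on $p$ over such balls; a union bound over the $n$ sample points plus an $O(e_3(\lambda))$-size net of subdivision points accounts for the factor $e_3 n\exp(-\lambda^2 k^a/6)$ in the failure probability. The most delicate step is the joint calibration of parameters: the Lipschitz slack in $p$ over a ball of radius $r_k^\star$ must stay below $\lambda$, the expected ball count must remain sufficiently close to $k$ for the Chernoff estimate, and the polygonal discretization of $\gamma^\star$ must introduce error $o(1)$ after rescaling. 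This is exactly where the condition $a < 1-\log_k(4^d(1+\lambda)^2)$ enters, as it guarantees that the slightly enlarged balls used in concentration contain on the order of $k$ points without overshooting. Under the stated asymptotic regime $\lambda\to 0$ and $\lambda^2 k^a/\log n \to \infty$, the failure probability vanishes, the multiplicative constants $e_1, e_2$ converge to $1$, and dividing through by $(n\eta_d/k)^{1/d}$ gives $(k/(n\eta_d))^{1/d} D_{sp}(x,y) \to D_q(x,y)$ in probability.
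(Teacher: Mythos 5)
Your lower bound is sound and matches the paper's argument (triangle inequality for $D_q$ along the graph path, plus two-sided concentration of the $\knn$ radii around $r_k^\star(u)=(k/(n\eta_d p(u)))^{1/d}$, which is the paper's Proposition~\ref{pr:r_low} combined with Lemma~\ref{lem:approx}). The genuine gap is in your upper bound. You subdivide the $q$-geodesic into arcs of $q$-length $\approx(1-\lambda)(k/(n\eta_d))^{1/d}$ and then snap each subdivision point $u_i$ to a sample point inside a ball of radius $(1-\lambda)^{1/d}r_k^\star(u_i)$, i.e.\ a ball of essentially the full $\knn$ radius. But then two consecutive chosen samples $v_i,v_{i+1}$ can be at Euclidean distance up to roughly $r_k^\star+ \|u_i-u_{i+1}\| + r_k^\star \approx 3\,r_k^\star$, which exceeds the connectivity radius $\approx r_k^\star$; the claim that they ``lie within each other's $\knn$ neighborhood'' does not follow, so the constructed sequence need not be a path in $G_n$. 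You could repair this by making either the arcs or the snapping balls much shorter, but shortening the arcs destroys the tight constant (you would only get $D_{sp}\lesssim 3(n\eta_d/k)^{1/d}D_q$), so the snapping scale must be made asymptotically negligible relative to the arc length --- and that is exactly what the paper does and what you are missing.

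Concretely, the paper proves a separate dense-sampling lemma (Lemma~\ref{lem:sampling}) at the much finer scale $\varsigma=(1+\lambda)^{1/d}(k^a/(\eta_d n))^{1/d}$ with $a<1$: every point of $\Xcal$ has a sample within $q$-distance $\varsigma$, with failure probability $e_3\, n\exp(-k^a/6)$ coming from a covering by about $e_3\, n/k^a$ balls (note this covering grows with $n$; it is not an ``$O(e_3(\lambda))$-size net'' as you state). The condition $a<1-\log_k(4^d(1+\lambda)^2)$ is then used to ensure $4\varsigma<r_{low}$, i.e.\ the sampling balls contain on the order of $k^a\ll k$ points --- not ``on the order of $k$ points'' as you claim; this is a misreading of the role of $a$. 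With that choice, the geodesic is cut into segments of $q$-length exactly $r_{low}-2\varsigma$, and the triangle inequality gives $D_q(v_i,v_{i+1})\le\varsigma+(r_{low}-2\varsigma)+\varsigma=r_{low}$, which by the definition of $r_{low}$ (Proposition~\ref{pr:r_low}) guarantees the edge $(v_i,v_{i+1})$; since $\varsigma/r_{low}\to 0$, the edge count is $(1+o(1))(n\eta_d/k)^{1/d}D_q(x,y)$, giving the tight upper bound. Your concentration and union-bound bookkeeping for the lower bound and the final limiting argument are otherwise in line with the paper, but as written the upper-bound construction does not go through.
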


The convergence conditions on $n$ and $k$ are the ones to be expected for random
geometric graphs. The condition $\lambda^2 k^a / \log(n)  \rightarrow
\infty$ is slightly stronger than the usual $k / \log(n)  \rightarrow
\infty$ condition. This condition is satisfied as soon as $k$ is of
the order a bit larger than $\log(n)$. For example $k \approx \log(n)^{1+\alpha}$ with a small $\alpha$ will work. For $k$ smaller
than $\log(n)$, the graphs are not connected anyway (see e.g. \citealp{Penrose99}) and are unsuitable for machine learning applications.

Before proving  Theorem \ref{th:main}, we need to state a couple of
propositions and lemmas. We start by introducing some ad-hoc notation:

\begin{definition} {\bf (Connectivity parameters)}
\label{def:connectivity} Consider a geometric graph based on a fixed set of points $X_1, ...,
X_n \in \R^d$. Let $r_{low}$
be a real number such that $D_f(X_i,X_j) \leq r_{low}$
implies that $X_i$ is connected to $X_j$ in the graph. Analogously, consider $r_{up}$ to be a real number such that $D_f(X_i,X_j) \geq r_{up}$ implies that  $X_i$ is not connected to $X_j$ in the graph. 
%Finally, let $\eps >0 $ be any real number such that for all $x \in \Xcal$ there exists an $y \in \{X_1, ..., X_n\}$ such that $D_f(x,y) \leq \eps$. 
\end{definition}
%The definition of connectivity parameter $r_{low}$ allows choosing arbitrary small values. It is desired for a graph to having vertices in all regions of the space. For a graph holding the dense sampling assumption, there exist a vertex in ball $B(x,\varsigma,D_f)$ for all $x \in \Xcal$. Moreover it 
\begin{definition}{\bf (Dense sampling assumption)}
Consider a graph $G$ with connectivity parameters $r_{low}$ and
$r_{up}$. We say that it satisfies the dense sampling assumption if there exists an $\varsigma<r_{low}/4$ such that for all $x \in \Xcal$ there exists a vertex $y$ in the graph with $D_f(x,y) \leq \varsigma$.
\end{definition}
% First we show that the length of graph
% shortest path between two points can be upper and lower bounded by their $f$-distance. The next step is to show that in a geometric $\knn$ graph from density $p$, our General assumptions hold with high probability with respect to the $q$-distance ($q(x) = p(x)^{1/d}$).
\begin{proposition}[Bounding $D_{sp}$ by $D_f$]
\label{pr:unweighted_sp} Consider any unweighted geometric graph based on
a fixed set 
$X_1, ..., X_n \in \Xcal \subset \mathbb{R}^d$ that satisfies the dense sampling assumption. Fix two vertices $x$ and $y$ of the graph and set 
\[
e_1 = (r_{low}-2\varsigma)/{r_{up}} 
\; \; , \; \;  
e_2 = r_{low}-2\varsigma.
\]
 Then the following statement holds: 
\[
e_1 D_f(x,y)
\leq 
e_2 D_{sp}(x,y) 
\leq
 D_f(x,y) - e_2.
\]
\end{proposition}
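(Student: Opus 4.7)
The plan is to prove the two inequalities separately, in each case comparing a sequence of graph vertices to a well-spaced discretization of the $f$-geodesic between $x$ and $y$. The bookkeeping amounts to pushing the triangle inequality for $D_f$ back and forth between the graph and $\Xcal$.

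For the lower bound (equivalent to $D_f(x,y) \leq r_{up} D_{sp}(x,y)$), I would take any shortest graph path $x = w_0, w_1, \ldots, w_m = y$ of length $m = D_{sp}(x,y)$. Since every consecutive pair is an edge, the contrapositive of the $r_{up}$ condition in Definition \ref{def:connectivity} forces $D_f(w_i, w_{i+1}) < r_{up}$. Telescoping with the triangle inequality for $D_f$ yields $D_f(x,y) \leq m \cdot r_{up}$, and multiplying through by $(r_{low}-2\varsigma)/r_{up}$ gives the left inequality.

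For the upper bound I would construct an explicit graph path. Let $\gamma$ be the $f$-geodesic from $x$ to $y$, of $f$-length $L = D_f(x,y)$. Slice $\gamma$ at points $u_0 = x, u_1, \ldots, u_m = y$ with consecutive $f$-distances at most $r_{low} - 2\varsigma$, which can be done with $m = \lceil L/(r_{low}-2\varsigma) \rceil$ pieces. Using the dense sampling assumption, choose for each interior $u_i$ a graph vertex $v_i$ with $D_f(u_i, v_i) \leq \varsigma$, and set $v_0 = x$, $v_m = y$. Two applications of the triangle inequality give $D_f(v_i,v_{i+1}) \leq \varsigma + (r_{low}-2\varsigma) + \varsigma = r_{low}$, so each pair $(v_i, v_{i+1})$ is an edge by the $r_{low}$ property. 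This exhibits a graph path of length $m$ from $x$ to $y$, hence $D_{sp}(x,y) \leq m$.

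The substance of the argument is mechanical; the main delicacy is tracking the additive constant produced by the ceiling in the discretization step, so that the resulting bound agrees with the $-e_2$ on the right-hand side of the claim (this requires attention to whether endpoints are counted as sample points, whether $D_f(v_i, v_{i+1}) = r_{low}$ is admissible as an edge, and how much slack is absorbed by the strict inequality $\varsigma < r_{low}/4$). A secondary subtlety is that the intermediate points $u_i$ must lie in $\Xcal$ for the dense sampling assumption to apply; this is ensured by the paper's standing assumption that the $f$-geodesic between $x$ and $y$ is bounded away from the boundary of $\Xcal$.
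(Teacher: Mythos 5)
Your proposal takes exactly the paper's route: the left inequality by telescoping $D_f$ along the graph shortest path and using the contrapositive of the $r_{up}$ property, and the right inequality by slicing the $f$-geodesic into pieces of $f$-length at most $r_{low}-2\varsigma$, snapping the slice points to sample vertices via the dense sampling assumption, and checking $D_f(v_i,v_{i+1})\leq \varsigma+(r_{low}-2\varsigma)+\varsigma=r_{low}$ so that consecutive pairs are edges. Both halves are correct as you set them up.

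The one thing you leave open -- whether careful bookkeeping of the ceiling can make the additive constant come out as $-e_2$ -- cannot be resolved, and this is a defect of the printed statement rather than of your argument. Your construction yields $D_{sp}(x,y)\leq\lceil D_f(x,y)/e_2\rceil$, hence $e_2 D_{sp}(x,y)\leq D_f(x,y)+e_2$; the paper's own proof derives precisely the same bound, in the form $e_2\,(D_{sp}(x,y)-1)\leq D_f(x,y)$. The version with $-e_2$ is in fact false: if $x\neq y$ satisfy $D_f(x,y)\leq r_{low}$ they are adjacent, so $D_{sp}(x,y)=1$ and the claimed right-hand inequality would force $2(r_{low}-2\varsigma)\leq D_f(x,y)\leq r_{low}$, i.e.\ $r_{low}\leq 4\varsigma$, contradicting the dense sampling requirement $\varsigma<r_{low}/4$. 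So the minus sign should be a plus (equivalently, $D_{sp}-1$ in place of $D_{sp}+1$); nothing downstream changes, since the application in Theorem~\ref{th:main} only needs $e_1\to 1$ and $e_2\to 0$. With that correction, your proof is complete and essentially identical to the paper's.
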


\begin{figure}[t]


\begin{center}\includegraphics[width=0.35\textwidth]{figure/path_segment1.pdf}
\includegraphics[width=0.35\textwidth]{figure/path_segment2.pdf}
\end{center}
\caption{Path constructions in the proofs of
  Proposition~\ref{pr:unweighted_sp} (top) and Theorem
  \ref{th:weighted} (bottom). }
\label{fig:pathconstruction}
\end{figure}

\begin{proof} \emph{Right hand side}. Consider the $f$-geodesic path $\gamma^*_{x,y}$ connecting $x$ to $y$.
Divide $\gamma^*_{x,y}$ to segments by $u_0 = x, u_1,...,u_{t},u_{t+1}=y$ such that $D_f(u_i,u_{i+1})=r_{low}-2\varsigma$ for $i=0,...,t-1$ and $D_f(u_{t},u_{t+1}) \leq r_{low}-2\varsigma$ (see Figure \ref{fig:pathconstruction}). 
Because of the dense sampling assumption, for all $i=1, ..., t$ there
exists a vertex $v_i$ in the ball $B(u_i,\varsigma; D_f)$ and we have
\begin{eqnarray*}
D_f(v_i,u_i) & \leq & \varsigma \\
D_f(u_i,u_{i+1}) & \leq &  r_{low} - 2\varsigma \\
D_f(u_{i+1},v_{i+1}) & \leq & \varsigma.
\end{eqnarray*}
Applying the triangle inequality gives $D_f(v_i,v_{i+1}) \leq r_{low}$, which shows that $v_i$ and $v_{i+1}$ are connected.
By summing up along the path we get
\begin{eqnarray*}
(r_{low}-2\varsigma) (D_{sp}(x,y)-1) & \leq & (r_{low}-2\varsigma) t \\
 =  \sum_{i=0}^{t-1}{D_f(u_i,u_{i+1})}  & \overset{(a)}{\leq} & D_f(x,y).
\end{eqnarray*}
In step (a) we use the simple fact that if $u$ is on the $f$-geodesic path from $x$ to $y$, then
\[
D_f(x,y) = D_f(x,u) + D_f(u,y).
\]
\emph{Left hand side.} Assume that the graph $\SP$ between $x$ and $y$
consists of vertices $z_0=x,z_1,...,z_s=y$. By $D_f(z_i,z_{i+1}) \leq r_{up}$ we can write
\begin{eqnarray*}
(r_{low}-2\varsigma) D_{sp}(x,y) 
 & \geq & \frac{r_{low}-2\varsigma}{r_{up}}  \textstyle{\sum_{i=0}^{s-1}{D_f(z_i,z_{i+1})}} \\
 & \geq & \frac{r_{low}-2\varsigma}{r_{up}} D_f(x,y).
\end{eqnarray*}
\end{proof} 

%%%%%%%%%%%%%%%%%%%%%%%%%%%%%%%%%%%%%%%%%%%%%%%%%%%%%%%%%%%%%%%%%%
The next lemma uses the Lipschitz continuity and boundedness of $p$ to
show that $q(x)\|x-y\|$ is a good  approximation of $D_q(x,y)$ in
small intervals. 

\begin{lemma}[Approximating $D_q$ in small balls]
%\label{lem:cont}
\label{lem:approx}
Consider any given $\lambda <1$. If $\|x-y\| \leq p_{\min}\lambda /L$ then
the following statements hold: 

1. We can approximate $p(y)$ by the density at $x$: 
\[
p(y)(1-\lambda) \leq p(x) \leq p(y)(1+\lambda).
\]
2. We can approximate $D_q(x,y)$ by $q(x)\|x-y\|$: 
\[
 (1-\lambda)^{1/d}  q(x)\|x-y\|
 \leq
 D_q(x,y)
 \leq
 (1+\lambda)^{1/d} q(x)\|x-y\|.
\]
\end{lemma}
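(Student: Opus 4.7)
The plan is to derive Part 2 from Part 1 applied throughout a Euclidean ball around $x$, bounding the $q$-geodesic from above by the straight line and from below by a case analysis on whether a competing path stays inside this ball.

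Part 1 is essentially immediate. Lipschitz continuity together with $p(y) \geq p_{\min}$ yields $|p(x)-p(y)| \leq L\|x-y\| \leq p_{\min}\lambda \leq p(y)\lambda$, and rearranging gives the stated two-sided bound.

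For the upper bound of Part 2, I first observe that the same Lipschitz argument applies to every point $z$ of the closed Euclidean ball $B := B(x, \|x-y\|, \|\cdot\|)$, because $\|z-x\| \leq \|x-y\| \leq p_{\min}\lambda/L$ for any such $z$. This yields $q(x)(1-\lambda)^{1/d} \leq q(z) \leq q(x)(1+\lambda)^{1/d}$ throughout $B$. Using the straight line $\gamma_0(t) = (1-t)x + ty$, $t \in [0,1]$, as a competing path then gives $D_q(x,y) \leq \int_0^1 q(\gamma_0(t))\|y-x\|\,dt \leq (1+\lambda)^{1/d} q(x)\|x-y\|$, since $\gamma_0$ stays in $B$ and has constant Euclidean speed $\|y-x\|$.

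The lower bound is the only step that needs care, because a competing path could in principle exit $B$ and exploit regions where the local bound on $q$ no longer holds. To deal with this I fix an arbitrary path $\gamma$ from $x$ to $y$ and split into two cases. If $\gamma$ stays inside $B$, then $q(\gamma(t)) \geq q(x)(1-\lambda)^{1/d}$ along all of $\gamma$, so its $q$-length is at least $q(x)(1-\lambda)^{1/d}$ times its Euclidean arc length, which is itself at least $\|x-y\|$. If $\gamma$ exits $B$, I instead restrict attention to the sub-path from $x$ to its first exit point $w \in \partial B$; this sub-path still lies in $B$, so the same pointwise bound on $q$ applies, and its Euclidean length is at least $\|x-w\| = \|x-y\|$. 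In either case the $q$-length of $\gamma$ is at least $(1-\lambda)^{1/d} q(x)\|x-y\|$, and taking the infimum over $\gamma$ yields the claim.
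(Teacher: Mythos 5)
Your proof is correct and follows essentially the same route as the paper: Part 1 by Lipschitz continuity plus $p \geq p_{\min}$, and Part 2 by comparing the $q$-geodesic against the straight segment, using the Part 1 bound on $q$ pointwise. The one place where you genuinely improve on the paper is the lower bound: the paper simply writes $q(\gamma^*(t)) \geq (1-\lambda)^{1/d} q(x)$ along the $q$-geodesic $\gamma^*$, which tacitly assumes that $\gamma^*$ never leaves the region where the local density approximation is valid -- something that is not justified there, since a competing path could in principle detour through lower-density territory. Your first-exit-point argument (restrict any path to its initial sub-path inside the closed ball $B(x,\|x-y\|)$, whose Euclidean length is already at least $\|x-y\|$, and take the infimum over all paths) closes exactly this gap and also avoids assuming that a minimizing path exists. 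So the two proofs buy the same estimate, but yours is airtight on the lower bound where the paper's is slightly informal.
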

\begin{proof} \textit{Part (1).}
By the Lipschitz continuity of $p$ for $\|x-y\| \leq \delta$ we have
\[
|p(x)-p(y)| \leq L \|x-y\| \leq L \delta.
\]
Setting $\delta = \lambda p_{\min} /L$ leads to  the result. 

\textit{Part (2).}
The previous part can be written as
\[
 (1-\lambda)^{1/d} q(x) 
 \leq
 q(y) 
 \leq
 (1+\lambda)^{1/d} q(x).
\]
Denote the $q$-geodesic path between $x$ and $y$ by $\gamma^*$ and the
line segment connecting $x$ to $y$ by $l$. Using the definition of a $q$-geodesic path, we can write
\begin{eqnarray*}
\int_{\gamma^*}{q(\gamma^*(t))}|\gamma^*(t)'|dt
 & \leq &
\int_{l}{q(l(t))}|l(t)'|dt  \leq \\
 (1+\lambda)^{1/d} \int_{l}{q(x)}|l(t)'|dt
  & = &
(1+\lambda)^{1/d}q(x)\|x-y\|.
\end{eqnarray*}
Also, 
\begin{eqnarray*}
\int_{\gamma^*}{q(\gamma^*(t))}|\gamma^*(t)'|dt
 & \geq &
(1-\lambda)^{1/d} \int_{\gamma^*}{q(x)}|\gamma^*(t)'|dt \\
 & \geq &
(1-\lambda)^{1/d}q(x)\|x-y\|.
\end{eqnarray*}
\end{proof}

Now we are going to show how the quantities $r_{low}$ and $r_{up}$ introduced
in Definition \ref{def:connectivity} can be bounded in random unweighted $\knn$ graphs
and how they are related to the metric $D_q$. 

To this end, we define
the  $\knn$ $q$-radii at vertex $x$ as $R_{q,k}(x) = D_q(x,y)$
and the approximated $\knn$ $q$-radii at vertex $x$ as 
$\hat{R}_{q,k}(x) = q(x) \|x - y \|$,
where $y$ is the $k$-nearest neighbor of $x$. 
The minimum and maximum values of  $\knn$ $q$-radii are defined as 
\[
R_{q,k}^{min}=\min_{u}{R_{q,k}(u)} \; \; , \; \; R_{q,k}^{max}=\max_{u}{R_{q,k}(u)}.
\]
Accordingly we define $\hat{R}_{q,k}^{min}$ and $\hat{R}_{p,k}^{max}$ for the approximated $q$-radii. 

The following proposition is a direct adaptation of Proposition 31 from \citet{LuxRadHei10_arxiv}. 
%%%%%%%%%%%%%%%%%%%%%%%%%%%%%%%%%%%%%%%%%%%%%%%%%%%%%%%%%%%%%%%%%%%%
%%%%%%%%%%%%%%%%%%%%%%%%%%%%%%%%%%%%%%%%%%%%%%%%%%%%%%%%%%%%%%%%%%%%
\begin{proposition}[Bounding $R_{p,k}^{min}$ and $R_{p,k}^{max}$]
\label{pr:r_low}
Given $\lambda <1/2$ define $r_{low}$ and $r_{up}$ as 
\begin{equation*}
r_{low}:=
\Big(\frac{k}{(1+\lambda) n \eta_d}\Big)^{1/d}  \;\; , \;\;
r_{up}:=
\Big(\frac{k}{(1-\lambda) n \eta_d}\Big)^{1/d}
\end{equation*}

and radius $\hat{r}_{low}$ and $\hat{r}_{up}$ as
\[
\hat{r}_{low} = \frac{r_{low}}{(1+\lambda)^{1/d}}  \;\; , \;\;
\hat{r}_{up} =  \frac{r_{up}}{(1-\lambda)^{1/d} } .
\]
Assume that $\hat{r}_{up} \leq \lambda p_{\min}^{1+1/d}/L $. Then
\begin{eqnarray*}
P\Big(R_{p,k}^{min} \leq r_{low} \Big) \leq n\exp(-\lambda^2 k/6) \\
P\Big(R_{p,k}^{max} \geq r_{up} \Big) \leq n\exp(-\lambda^2 k/6).
\end{eqnarray*}
\end{proposition}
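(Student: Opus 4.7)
The plan is to follow the standard nearest-neighbor concentration argument. For a fixed sample point $X_i$, the event that its $k$-NN $q$-radius $R_{q,k}(X_i)$ falls outside $[r_{low}, r_{up}]$ can be controlled by a binomial tail for the number of other sample points in an appropriate Euclidean ball around $X_i$, and the claim follows from a one-sided Chernoff bound together with a union bound over $i = 1,\ldots,n$.

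First I would convert between $q$-radii and Euclidean radii using Lemma~\ref{lem:approx}. The hypothesis $\hat r_{up} \leq \lambda p_{\min}^{1+1/d}/L$ combined with $q(X_i) \geq p_{\min}^{1/d}$ ensures that the Euclidean radii of all the relevant balls (of order $\hat r/q(X_i)$) are at most $\lambda p_{\min}/L$, which is precisely the regime in which Lemma~\ref{lem:approx} applies. The two-sided inclusion
\[
B\bigl(X_i,\, r/[(1+\lambda)^{1/d} q(X_i)]\bigr) \subseteq B(X_i, r; D_q) \subseteq B\bigl(X_i,\, r/[(1-\lambda)^{1/d} q(X_i)]\bigr)
\]
then sandwiches the event $\{R_{q,k}(X_i) \geq r_{up}\}$ by ``too few of the $X_j$'s fall in a suitable Euclidean ball around $X_i$,'' and analogously sandwiches $\{R_{q,k}(X_i) \leq r_{low}\}$ by ``too many $X_j$'s fall in a slightly smaller Euclidean ball.''

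Next I would estimate the $p$-mass of these Euclidean balls. The Lipschitz bound $|p(u)-p(X_i)| \leq L\rho$ throughout $B(X_i, \rho)$, together with $L\rho \leq \lambda p(X_i)$ coming from the hypothesis, yields $\mu_p(B(X_i, \rho)) \in \eta_d \rho^d p(X_i)\,[1-\lambda,\, 1+\lambda]$. Plugging in the specific Euclidean radii from the previous step and exploiting $q(X_i)^d = p(X_i)$ makes the factor $p(X_i)\rho^d$ cancel the density, so the expected number of other samples in the relevant ball works out to $k \cdot (1 \pm O(\lambda))$, on the correct side of $k$ by a multiplicative margin of order $\lambda$.

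Finally I would apply a standard one-sided multiplicative Chernoff bound to the $\mathrm{Bin}(n-1,\mu_p)$ count, giving per-point failure probability at most $\exp(-\lambda^2 k / 6)$, and take a union bound over $i = 1, \ldots, n$. The main obstacle is the bookkeeping: three layers of $(1 \pm \lambda)$-slack---the $q$-to-Euclidean conversion from Lemma~\ref{lem:approx}, the Lipschitz mass approximation, and the Chernoff exponent---must compose so that the final exponent is exactly $\lambda^2 k /6$ with no stray constants. Since the proposition is explicitly stated as a direct adaptation of Proposition~31 of \citet{LuxRadHei10_arxiv}, the task essentially reduces to checking that the present definitions of $r_{low}, r_{up}, \hat r_{low}, \hat r_{up}$ align with the calibration in that reference.
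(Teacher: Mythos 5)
Your proposal follows essentially the same route as the paper's proof: use Lemma~\ref{lem:approx} to pass between $D_q$-radii and (density-rescaled) Euclidean radii, bound the mass of the relevant Euclidean ball via Lipschitz continuity so that $q(X_i)^d=p(X_i)$ cancels and the expected count is $k$ up to a $(1\pm\lambda)$ factor, then apply the one-sided binomial concentration bound (giving $\exp(-\lambda^2 k/(3(1+\lambda)))\le\exp(-\lambda^2 k/6)$ for $\lambda<1/2$) and a union bound over the $n$ points. The only difference is organizational -- you sandwich the true radii $R_{q,k}$ directly by ball inclusions, whereas the paper first bounds the approximate radii $\hat R_{q,k}(x)=q(x)\|x-y\|$ and converts back at the end -- and your deferred constant-bookkeeping is exactly the calibration the paper carries out.
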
 

\begin{proof} 
Consider a ball $B_x$ with radius $\hat{r}_{low}/q(x)$ around $x$. Note that $\hat{r}_{low}/q(x) \leq p_{\min} \lambda /L$ , so we can bound the density of points in $B_x$ by $(1+\lambda) p(x)$ using Lemma \ref{lem:approx}. Denote the probability mass of the ball by $\mu(x)$, which is bounded by
\begin{eqnarray*}
\mu(x) = \int_{B_x}{p(s) ds}
& \leq &
(1+\lambda) p(x)  \int_{B_x}{ds} \\
& = & 
(1+\lambda) \hat{r}_{low}^d \eta_d =: \mu_{\max}.
\end{eqnarray*}

Observe that $\hat{R}_{q,k}(x)\leq \hat{r}_{low}$ if and only if there are at least $k$ data points in $B_x$. Let $Q \sim Binomial(n,\mu(x))$ and $S \sim Binomial(n,\mu_{\max})$. By the choice of $\hat{r}_{low}$ we have $E(S)=k/(1+\lambda)$. It follows that
\begin{eqnarray*}
P\Big(\hat{R}_{q,k}(x) \leq \hat{r}_{low} \Big) & = & P\Big(Q \geq k \Big) \leq P\Big(S \geq k \Big) \\
& = & P\Big(S \geq (1+\lambda)E(S) \Big).
\end{eqnarray*}
Now we apply a concentration inequality for binomial random
variables (see Prop. 28 in \citealp{LuxRadHei10_arxiv}) and a union bound to get
\begin{eqnarray*}
P\Big(\hat{R}_{q,k}^{min} \leq \hat{r}_{low} \Big) %a \Big(\frac{k}{n}\Big)^{1/d} \Big) 
& \leq & 
P\Big(\exists i \; : \; \hat{R}_{q,k}(X_i) \leq \hat{r}_{low} \Big) \\
& \leq &
n\exp \Big(\frac{-\lambda^2 k}{3(1+\lambda)}\Big) \\
& \leq &
n\exp ({-\lambda^2 k}/6). 
\end{eqnarray*}
By a similar argument we can prove the analogous statement for $R_{p,k}^{max}$.
Finally,  Lemma \ref{lem:approx} gives 
\begin{eqnarray*}
\hat{R}_{q,k}^{min} \geq \frac{R_{q,k}^{min}}{(1+\lambda)^{1/d}}  \;,\; 
\hat{R}_{q,k}^{max} \leq \frac{R_{q,k}^{max}}{(1-\lambda)^{1/d}} .
\end{eqnarray*}
\end{proof}

%%%%%%%%%%%%%%%%%%%%%%%%%%%%%%%%%%%%%%%%%%%%%%%%%%%%%%%%%%%%%%%%%%%%
The following proposition shows how the sampling parameter $\varsigma$ can be
chosen to satisfy the dense sampling assumption. Note that we decided to choose $\varsigma$ in a form 
that keeps the statements in Theorem \ref{th:main} simple, rather than optimizing
over the parameter $\varsigma$ to maximize the probability of success. 

% For the assumption $[Ga3]$ we suggest an $\varsigma$ that make the assumption hold with high probability. We choose the $\varsigma$ in a way that keeps the final result simple. One can tighten the probability bound by choosing $\varsigma$ appropriately.

\begin{lemma}[Sampling lemma]
\label{lem:sampling}
Assume $X_1,...,X_n$ are sampled i.i.d. from a probability
distribution $p$ and a constant $a<1$ is given. Set $\lambda$ as in Theorem \ref{th:main}, 
\[
\varsigma := (1+\lambda)^{1/d} \Big( \frac{k^a}{\eta_d n }
\Big)^{1/d}, 
\]
and $e_3(\lambda) := 2^d/(1-\lambda)^2$. Then with probability at least $1- e_3 n \exp(-k^a /6)$, for every $x \in \Xcal$ exists a $y \in X_1,...,X_n$ such that $D_q(x,y)\leq \varsigma$.
\end{lemma}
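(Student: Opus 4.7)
The strategy is a standard covering-plus-Chernoff argument carried out in the $D_q$-metric. The essential input comes from Lemma~\ref{lem:approx}, which implies that in small balls the $D_q$-metric effectively flattens the density: for any $c\in\Xcal$ and any $r$ small enough that $r/((1+\lambda)^{1/d} q(c)) \leq \lambda p_{\min}/L$, the $D_q$-ball $B(c,r,D_q)$ contains the Euclidean ball of radius $r/((1+\lambda)^{1/d} q(c))$, on which the density lies between $(1\pm\lambda)\,p(c)$. The factors $p(c)$ and $q(c)^{d}=p(c)$ cancel, giving
\[
\frac{1-\lambda}{1+\lambda}\,\eta_d\,r^d \;\leq\; \mu(B(c,r,D_q)) \;\leq\; \frac{1+\lambda}{1-\lambda}\,\eta_d\,r^d .
\]
So, when viewed in the $D_q$-metric, the sampling problem looks almost uniform: the mass of every small ball of radius $r$ is approximately $\eta_d r^d$.

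I would then proceed in three short steps. First, I choose a maximal $\varsigma/2$-separated subset $c_1,\dots,c_N$ of $\Xcal$ in the $D_q$-metric; maximality forces the balls $B(c_i,\varsigma/2,D_q)$ to cover $\Xcal$, while the balls $B(c_i,\varsigma/4,D_q)$ are pairwise disjoint. Summing the mass lower bound over the disjoint balls, combined with $\eta_d\varsigma^d = (1+\lambda)\,k^a/n$ from the definition of $\varsigma$, yields $N \leq e_3\,n/k^a$, which is the origin of the $2^d/(1-\lambda)^2$ appearing in $e_3$. Second, for each center $c_i$ the count $S_i$ of samples inside $B(c_i,\varsigma/2,D_q)$ is $\mathrm{Binomial}(n,\mu_i)$ with $E[S_i]$ of order $k^a$; a one-sided Chernoff bound in the form $P(S_i=0) \leq \exp(-E[S_i]/3)$ then produces a factor $\exp(-k^a/6)$ after the remaining $(1\pm\lambda)$-factors are absorbed into the leading constant $e_3$. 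Third, a union bound over the $N$ centers yields failure probability at most $e_3\,n\,\exp(-k^a/6)$. On the complementary event, for any $x\in\Xcal$ I pick the covering center $c_{i(x)}$ with $D_q(x,c_{i(x)})\leq \varsigma/2$ and the sample $y_{i(x)}\in B(c_{i(x)},\varsigma/2,D_q)$, and the triangle inequality gives $D_q(x,y_{i(x)})\leq \varsigma$.

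The main obstacle is bookkeeping rather than any deep step. I need to verify that the radii appearing in the argument are small enough for Lemma~\ref{lem:approx} to apply, in particular that $\varsigma/q(c)$ stays below $\lambda\,p_{\min}/L$ for every center $c$; this is exactly what forces the lower bound on $\lambda$ stated in Theorem~\ref{th:main}. The other delicate point is funnelling the several $(1\pm\lambda)^{1/d}$ factors coming from the packing count, from the mass lower bound, and from the Chernoff exponent into the clean final constants $e_3(\lambda)=2^d/(1-\lambda)^2$ and $k^a/6$; this requires some deliberately loose constant choices along the way but is otherwise routine.
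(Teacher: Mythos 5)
Your overall strategy is the same as the paper's: use Lemma~\ref{lem:approx} to show that small balls have almost ``uniform'' mass $\approx \eta_d r^d$ in the $q$-scaled geometry, build a cover of $\Xcal$ whose cardinality is controlled by a packing/total-mass argument, bound the probability that any covering ball is empty by a binomial tail bound, and finish with a union bound. The paper does exactly this, except that it works with the approximate balls $B_q(c,\varsigma_0)$ of Euclidean radius $\varsigma_0/q(c)$ (with $\varsigma_0=(1+\lambda)^{-1/d}\varsigma$) and transfers to $D_q$ only at the end via Lemma~\ref{lem:approx}, whereas you work directly with $D_q$-balls and a maximal $\varsigma/2$-separated set; your explicit triangle-inequality step at the end is in fact cleaner than the paper's treatment of that point.

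The gap is in your claim that the constants ``work out'' to $e_3=2^d/(1-\lambda)^2$ and exponent $k^a/6$. With your normalization the covering balls have radius $\varsigma/2$, hence mass only about $(1-\lambda)k^a/(2^d n)$, so the empty-ball probability is $\exp\bigl(-c\,(1-\lambda)k^a/2^d\bigr)$ for some absolute $c\le 1$. The deficit $2^{-d}$ sits \emph{inside the exponent} and cannot be absorbed into the multiplicative prefactor $e_3$: for $d\ge 3$ this is strictly weaker than $\exp(-k^a/6)$, no matter how loosely you choose the other constants. Likewise your packing count comes out as roughly $4^d n/((1-\lambda)k^a)$, not $2^d n/((1-\lambda)^2 k^a)$. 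The paper obtains its stated constants by taking the covering balls at \emph{full} radius $\varsigma_0$ (mass $\ge(1-\lambda)\eta_d\varsigma_0^d=(1-\lambda)k^a/n$, giving $\exp(-k^a/6)$) and packing with half-radius balls $B_q(\cdot,(1-\lambda)^{1/d}\varsigma_0/2)$ (giving $2^d/(1-\lambda)^2$); the price is that a point and the sample guaranteed in its covering ball can then be up to twice the covering radius apart, a factor the paper's write-up glosses over and which your version handles honestly. So to prove the lemma \emph{as stated} you must either adopt the paper's normalization (and live with, or separately repair, that factor of two), or redefine $\varsigma$ to absorb the $2$; as written, your argument proves the lemma only with $k^a/6$ replaced by something like $k^a/(3\cdot 2^d)$ and $e_3$ replaced by $4^d/(1-\lambda)$ --- qualitatively equivalent (and sufficient for the asymptotics in Theorem~\ref{th:main}, since $d$ is fixed), but not the stated bound.
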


%AS FAR AS I CAN SEE YOU DID NOT EXPLICITLY DEFINE $e_3?$ 

\begin{proof}
Define $\varsigma_0 =  (1+\lambda)^{-1/d} \varsigma $. We prove that  for every $x \in \Xcal$, there exist a vertex $y$ such that $q(x)\|x-y\| \leq \varsigma_0$. Then using Lemma \ref{lem:approx} will give the result.
 
The proof idea is a generalization of the covering argument in the
proof of the {sampling lemma} in \citet{TenSilLan00withproofs}. 
We first construct a covering of $\Xcal$ that consists of balls with approximately the same probability
mass. The centers of the balls are chosen by an iterative procedure that
ensures that no center is contained in any of the balls we have so
far. We choose the radius $\varsigma_0 / q(x)$ for the ball at point $x$ and call it $B_q(x,\varsigma_0)$. The probability mass of this ball can be bounded by
\[
\mathcal{V}(B_q(x,\varsigma_0)) \geq (1-\lambda) \varsigma_0^d \eta_d %= (1-\lambda) \frac{\sqrt{k} }{ n } 
\]
Note that smaller balls $B_q(u,(1-\lambda)^{1/d}\varsigma_0/2)$ are
all disjoint. To see this, consider two balls
$B_q(x,(1-\lambda)^{1/d}\varsigma_0/2)$,
$B_q(y,(1-\lambda)^{1/d}\varsigma_0/2)$. Observe that 
\[
\frac{(1-\lambda)^{1/d}\varsigma_0}{2q(x)}+\frac{(1-\lambda)^{1/d}\varsigma_0}{2q(y)}
\leq
\frac{\varsigma_0}{q(x)}.
\]
We can bound the total number of balls by
\[
S \leq \frac{1}{\mathcal{V}(B_q(x,(1-\lambda)^{1/d}\varsigma_0/2))} 
\leq
\frac{2^d}{\eta_d(1-\lambda)^2 \varsigma_0^{d}}.
\]
Now we sample points from the underlying space and apply the same concentration inequality as above. We bound the probability that a ball $B_q(u,\varsigma_0)$ does not contain any
sample point (``is empty'') by %($\delta=1$)
\[
Pr(\text{Ball }i\text{ is empty}) 
\leq
\exp(-n \varsigma_0^d \eta_d /6).
\]
Rewriting and Substituting the value of $\varsigma_0$ gives 
\ba
& Pr(\text{no ball is empty}) 
%=
%1 - Pr(\text{no ball }B_i\text{ is empty}) 
\;\; \geq \;\; 
1-\textstyle{\sum_i}{Pr(B_i\text{ is empty})} \\
& \geq 
1-S \cdot e^{-n \varsigma_0^d \eta_d /6} 
\;\; \geq \;\; 
1- \frac{2^d n e^{-k^a /6}}{(1-\lambda)^2 k^a} \\
&  \geq 
1- \frac{2^d n e^{-k^a /6}}{(1-\lambda)^2} 
\;\; = \;\;
1- e_3 n e^{-k^a /6}.
\ea
\vspace{-3pt}
\end{proof}
%Now we have enough material to prove Theorem \ref{th:main}.
%
%SOMEWHERE WE EXPLICITYL HAVE TO STATE THE RELATIONSHIP
%BETWEEN EPS AND $r_{up}$ WE USED TO HAVE IN THE GENERAL ASSUMPTIONS. 

\textbf{Proof of Theorem \ref{th:main}.}
Set $r_{low}$ and $r_{up}$ as in Proposition \ref{pr:r_low}. The assumption on $\lambda $ ensures that $\hat{r}_{up} \leq \lambda p_{\min}^{1+1/d}/L $. It follows from Proposition
\ref{pr:r_low} that the statements about $r_{low}$ and $r_{up}$ in
Definition \ref{def:connectivity} both hold for $G_n$  with probability at least $\mu_1 = 1- 2n\exp(-\lambda^2 k/6)$.  
Set $\varsigma$ as in Lemma \ref{lem:sampling} and define the constant 
$a < 1 - \log_{k}\Big( 4^d(1+\lambda)^2 \Big).$
By this choice we have $r_{low} > 4\varsigma $. Lemma \ref{lem:sampling} shows that the
sampling assumption holds in $G_n$ for the selected $\varsigma$  with probability at least
$\mu_2 = 1- e_3 n \exp(-k^a /6)$.
Together, all these statements about $G_n$ hold with probability at least $\mu := 1 - 3 e_3 n\exp(-\lambda^2 k^a/6)$. 

Using Proposition \ref{pr:unweighted_sp} completes the first part of the theorem. 
For the convergence we have
\[
e_1 = \frac{r_{low} - 2\varsigma}{r_{up}} = \Big(\frac{1-\lambda}{1+\lambda} \Big)^{1/d} - 2 \Big(\frac{1-\lambda^2}{k^{1-a}} \Big)^{1/d}.
\]
This shows that $e_1 \rightarrow 1$ as $\lambda \rightarrow 0$ and $k \rightarrow \infty$. For $\lambda \rightarrow 0$ and $k \rightarrow \infty$ we can set $a$ to any constant smaller than $1$. Finally it is easy to check that $e_2 \rightarrow 0$ and $\varsigma / r_{low} \rightarrow 0$.
\ulesqed

%%%%%%%%%%%%%%%%%%%%%%%%%%%%%%%%%%%%%%%%%%%%%%%%%%%%%%%%%%%%%%%%%%%%
\section{Shortest paths in weighted graphs}
\label{sec:sp_weighted}

In this section we discuss both questions from the Introduction. We
also extend our results from the previous section to weighted $\knn$ graphs and $\eps$-graphs.

\subsection{Weight assignment problem}
\label{subsec:weight_ass_problem}

%In the weight assignment problem we are given a scalar function $f:\Xcal \rightarrow \mathbb{R}$ and we want to assign edge weights such that the $\SP$ distance converges to $f$-distance. 
%In the previous section we talked about the shortest path distance in unweighted $\knn$ graphs. In machine learning it is very usual to use weights on the edges of the graph. Gaussian kernel and Euclidean length of an edge are two common weightings methods. %In the latter they weight the edge between $x$ and $y$ by $exp(\|x-y\|^2)/\sigma$ where the parameter $\sigma$ is the width of the Gaussian. 
Consider a graph based on the i.i.d.\ sample $X_1, ..., X_n \in \Xcal$
from the density $p$. We are given a positive scalar function $f$ which is only
a function of the density: $f(x) = \tilde{f}(p(x))$.
%Moreover the following integral is finite:
%\[
%\varphi = \int_{\Xcal} f(x)^d dx.
%\]
We want to assign edge weights such that the graph $\SP$ distance converges to the $f$-distance in $\Xcal$. 

It is well known that the $f$-length of a curve $\gamma : [a,b]
\rightarrow \Xcal$ can be approximated by a Riemann sum over a partition of $[a,b]$ to subintervals $[x_i,x_{i+1}]$:  
\[
\textstyle{\hat{D}_{f,\gamma} = \sum_i{f\Big(\frac{\gamma(x_{i})+\gamma(x_{i+1})}{2} \Big) \|\gamma(x_{i})-\gamma(x_{i+1})\|}}.
\]
As the partition gets finer, the approximation $\hat{D}_{f,\gamma}$ converges to $D_{f,\gamma}$ (cf. Chapter 3 of \citealp{Gamelin2001}). This suggests using edge weights 
\[
w_{ij} = \tilde{f}\textstyle{\Big(p(\frac{X_i+X_j}{2})\Big)}\|X_i - X_j\|. 
\]
However the underlying density $p(x)$ is not known in many machine learning
applications. \citet{Sajama2005} already proved that the plug-in approach using
a  kernel density estimator $\hat p(x)$ for $p(x)$ will lead to
the convergence of the $\SP$ distance to $f$-distance in
$\eps$-graphs. Our next result hows how to choose edge weights in
$\knn$ graphs without estimating the density. It is a corollary from a
theorem that will be presented in Section \ref{ssec-limit}. 

We use a notational convention to simplify our arguments and hide
approximation factors that will eventually go to $1$ as the sample
size goes to infinity. We say that $f$ is approximately larger than
$g$ ($f \succcurlyeq_{\lambda}  g$) if there exists a function $e(\lambda)$ such that $f \geq e(\lambda) g$ and $e(\lambda) \rightarrow 1$ as $n \rightarrow \infty$ and $\lambda \rightarrow 0$. The symbol $\preccurlyeq_{\lambda}$ is defined similarly. We use the notation $f \approx_{\lambda} g$ if $f \preccurlyeq_{\lambda} g$ and $f \succcurlyeq_{\lambda}  g$.
%\citet{Bijral11} suggest using an approximation based estimator for $p(x)$, but there is no guaranty for the convergence.
\begin{corollary}
\label{th:edge_weights}
{\bf (Weight assignment)} 
Consider the $\knn$ graph based on the i.i.d. sample $X_1,
..., X_n \in \Xcal$ from the density $p$. Let $f$ be of the form $f(x) = \tilde{f}(p(x))$ with $\tilde{f}$ increasing. We assume that $\tilde{f}$ is Lipschitz continuous and $f$ is bounded away from 0. Define $r = (k/(n \eta_d))^{1/d}$ and set the edge weights
\vspace{-4pt}
\begin{equation}
\label{eq:weight_w}
w_{ij} = \|X_i - X_j\| \tilde{f}\Big(\frac{r^d}{\|X_i - X_j\|^d}\Big).
\end{equation}
Fix two points $x=X_i$ and $y = X_j$. Choose $\lambda$ and $a$ as in
Theorem \ref{th:weighted}. 
Then with probability at least 
$1 - 3 e_3 n\exp(-\lambda^2 k^a/6)$
we have $D_{sp}(x,y)\approx_{\lambda} D_f(x,y)$.
\end{corollary}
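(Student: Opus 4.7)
My plan is to reduce the statement to Theorem~\ref{th:weighted} by verifying that the edge weights in~\eqref{eq:weight_w} are approximately of the form $w_{ij} \approx_{\lambda} f(z)\,\|X_i - X_j\|$ for some $z$ in a small neighbourhood of the edge $(X_i,X_j)$. Once this approximation is in hand, Theorem~\ref{th:weighted} supplies the convergence of $D_{sp}$ to $D_f$ in such a weighted graph and immediately yields $D_{sp}(x,y) \approx_{\lambda} D_f(x,y)$.

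The crux is to recognise that $\hat p_{ij} := r^d / \|X_i - X_j\|^d$ acts as an implicit density estimator at the edge $(X_i,X_j)$, in complete analogy with the kernel-density plug-in of~\citet{Sajama2005} but with an adaptive scale. Any edge of the $\knn$ graph satisfies $\|X_i - X_j\| \leq R_{p,k}^{max}$, and by Proposition~\ref{pr:r_low} combined with Lemma~\ref{lem:approx}, with the stated probability the relevant $k$-NN Euclidean radius at $X_i$ concentrates around $r/q(X_i)$ up to multiplicative factors $(1\pm\lambda)^{1/d}$. For edges whose length is of the typical $k$-NN order, this yields the two-sided bound $\hat p_{ij} \approx p(X_i)$; for shorter edges only the one-sided bound $\hat p_{ij} \geq p(X_i)/(1+\lambda)$ is guaranteed.

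Combined with the Lipschitz continuity and lower boundedness of $\tilde f$, this translates into $\tilde f(\hat p_{ij}) \approx_{\lambda} f(X_i)$ on edges of the typical $k$-NN length, and into the one-sided bound $\tilde f(\hat p_{ij}) \geq (1 - c\lambda)\,f(X_i)$ on every edge, where we use that $\tilde f$ is increasing together with its Lipschitz constant. Multiplying by $\|X_i - X_j\|$ gives the desired approximation of $w_{ij}$ in the two regimes, with constants tending to $1$ as $\lambda \to 0$.

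The main obstacle is that the two-sided approximation of $w_{ij}$ is not uniform over all graph edges: atypically short edges systematically overestimate $p$, hence also $w_{ij}$. This is precisely the reason the statement is phrased as a corollary of Theorem~\ref{th:weighted} rather than proven from scratch: the theorem's proof splits into the two directions separately. The upper bound on $D_{sp}$ is obtained by discretising the $f$-geodesic via Lemma~\ref{lem:sampling} so that every edge of the constructed path has length of the correct $k$-NN order, on which the two-sided approximation holds. The lower bound on $D_{sp}$ is obtained by summing the one-sided inequality $w_{ij} \geq (1-c\lambda)\,f(X_i)\,\|X_i - X_j\|$ along an arbitrary graph shortest path and recognising the sum as a Riemann approximation to the line integral defining $D_f(x,y)$. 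Merging both directions under the $\approx_{\lambda}$ notation, and inheriting the high-probability estimate $1 - 3 e_3 n \exp(-\lambda^2 k^a /6)$ from Proposition~\ref{pr:r_low} and Lemma~\ref{lem:sampling}, completes the reduction.
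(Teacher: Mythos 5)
Your argument is sound in outline, but it takes a genuinely different and much longer route than the paper. The paper's proof is a one-line reduction: the prescribed weights are \emph{exactly} of the form $w_{ij}=h(\|X_i-X_j\|)$ for $h(t)=t\,\tilde f(r^d/t^d)$, which is subadditive because $\tilde f$ is increasing (so $h(t)/t$ is nonincreasing); applying Theorem~\ref{th:weighted} with this $h$ and computing the induced limit function, $h(r/q(x))\,q(x)/r=\tilde f(q(x)^d)=\tilde f(p(x))=f(x)$, immediately gives $D_{sp}\approx_\lambda D_f$. You instead read $r^d/\|X_i-X_j\|^d$ as an implicit density estimate and re-run the two directions of the theorem's proof for these specific weights: the two-sided approximation $w_{ij}\approx_\lambda f(u_i)\|v_i-v_{i+1}\|$ on the discretized $f$-geodesic for the upper bound, and the one-sided bound $w_{ij}\geq(1-c\lambda)f(X_i)\|X_i-X_j\|$ (using monotonicity of $\tilde f$, which here plays exactly the role subadditivity of $h$ plays in the paper) summed along the graph shortest path and compared to $D_f$ via the local approximation lemma and the triangle inequality for the lower bound. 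Note that, as stated, your opening step is not literally an invocation of Theorem~\ref{th:weighted} (its hypotheses concern weights $h(\|X_i-X_j\|)$, not weights that are only approximately $f(z)\|X_i-X_j\|$), and you correctly end up redoing its two-sided argument rather than citing it; this costs extra work but is self-contained, and it quietly avoids having to check that the induced $h$ meets the theorem's stated regularity hypotheses, which the paper's shortcut glosses over. What you lose is the economy that makes the statement a corollary; what you gain is an explicit explanation of why the weight \eqref{eq:weight_w} works, namely that it is the plug-in construction with the adaptive one-nearest-edge density estimate in place of the kernel estimator of \citet{Sajama2005}.
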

\vspace{-8pt}

%We define the length of the graph path $\pi : X_{i_1},...,X_{i_t}$ as
%
%Accordingly we can define the weighted shortest path between $x$ and $y$ as the path with minimum length and denote its length by $\hat{d}_{f}(x,y)$.
%
%Now consider a curve $\gamma$ connecting $x$ to $y$ in the space
%$\Xcal$. By a similar approach as in the last section, we can
%show there exist a path $\hat{\gamma} = u_1,...,u_t$ in the graph
%$G_n$ which $\hat{d}_{f,\hat{\gamma}}$ approximates
%$d_{f,\gamma}$. Assume that also the General assumptions XXX hold in our graphs. Then it is straightforward to show that for a fixed $x$ and $y$, $\hat{d}_{f}(x,y)$ will also converge to $d_{f}(x,y)$.

%XXXX Note that as opposed to the last section, we do not need any dense sampling
%assumptions for this to hold. 

\subsection{Limit distance problem} \label{ssec-limit}
Consider a weighted graph based on the i.i.d. sample $X_1, ..., X_n
\in \Xcal$ from the density $p$. We are given a increasing
edge weight function $h:\mathbb{R}^{+}\rightarrow \mathbb{R}^{+}$ which
assigns weight $h(\|x-y\|)$ to the edge $(x,y)$. We are interested in
finding the limit of the graph $\SP$ distance with respect to edge
weight function $h$ as the sample size goes to infinity. In particular
we are looking for a distance function $f$ such that the $\SP$
distance converges to the $f$-distance.
%We expect longer edges to have larger weights(distances). This implies that the function $h$ should be a positive increasing function. The task is to find $f$ such that the $\SP$ distance converges to $f$-distance. 

Assume we knew the solution $f^*=\tilde{f^*}(p(x))$ of this
problem. To guarantee the convergence of the distances, $f^*$ should
assign weights of the form of $w_{ij} \approx
\tilde{f^*}(p(X_i))\|X_i-X_j\|$. This would mean 
\[
 \tilde{f^*}(p(X_i))  \approx \frac{h(\|X_i-X_j\|)}{\|X_i-X_j\|}, 
\]
which shows that determining $\tilde{f^*}$ is closely related to finding a density based estimation for $\|X_i-X_j\|$.

Depending on $h$, we distinguish two regimes for this problem:
subadditive and superadditive.

\subsubsection{Subadditive weights}
A function $h(x)$ is called subadditive if $\forall x,y\geq 0 :
h(x)+h(y)\geq h(x+y)$. Common examples of subadditive functions are
$f(x)=x^a$, $a<1$ and $f(x)=xe^{-x}$. For a subadditive $h$, the $\SP$ in the graph will satisfy the
triangle inequality and it will prefer jumping along distant
vertices. Based on this intuition, we come up with the following guess for vertices along the $\SP$: For $\epsilon$-graphs we have the approximation $\|X_i-X_j\| \approx \epsilon$ and $f(x)=h(\epsilon)/\epsilon$. For $\knn$-graphs we have $\|X_i-X_j\| \approx r/q(X_i)$ with $r = (k/(n \eta_d))^{1/d}$ and 
\vspace{-6pt}
\begin{equation*}
\label{eq:f_h}
f(x)=h(\frac{r}{q(x)}) \frac{q(x)}{r} \; , \; \tilde{f}(x)=h(\frac{r}{x^{1/d}}) \frac{x^{1/d}}{r}.
\vspace{-4pt}
\end{equation*}
We formally prove this statement for $\knn$ graphs in the next theorem. In contrast to Theorem \ref{th:main}, the scaling factor is moved into $f$. The
proof for $\epsilon$-graphs is much simpler and can be adapted by
setting $r = \epsilon$, $q(x)=1$, and $r_{low}=r_{up} = \epsilon$.

\vspace{-5pt}
\begin{theorem}[Limit of $\SP$ in weighted graphs]
\label{th:weighted}
Consider the $\knn$ graph based on the i.i.d. sample $X_1,
..., X_n \in \Xcal$ from the density $p$. Let $h$ be an increasing, Lipschitz continuous and subadditive function, and define the edge weights $w_{ij} = h(\|X_i-X_j\|)$. Fix two points $x=X_i$ and $y=X_j$. Define $r = (k/(n \eta_d))^{1/d}$ and set
\vspace{-5pt}
\[
f(x)=h(\frac{r}{q(x)}) \frac{q(x)}{r}.
\]
Choose $\lambda$ and $a$ such that 
\[
\lambda \geq \frac{4 L}{\eta_d^{1/d} p_{\min}^{1+1/d} } \Big(\frac{k}{n}\Big)^{1/d}
\; , \;
a < 1 - \log_{k}\Big( 4^d(1+\lambda)^2 \Big).
\]
Then with probability at least 
$1 - 3 e_3 n\exp(-\lambda^2 k^a/6)$
we have $D_{sp}(x,y)\approx_{\lambda} D_f(x,y)$.
\end{theorem}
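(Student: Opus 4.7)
The plan is to mirror the two-sided argument of Proposition~\ref{pr:unweighted_sp}, adapted to carry the edge weights $h(\|\cdot\|)$ through to the line integral defining $D_f$. First, condition on the same high-probability event as in the proof of Theorem~\ref{th:main}: by Proposition~\ref{pr:r_low} and Lemma~\ref{lem:sampling}, with probability at least $1 - 3 e_3 n\exp(-\lambda^2 k^a/6)$, the $\knn$ connectivity parameters $r_{low}, r_{up}$ are in force and the dense sampling radius $\varsigma$ satisfies $\varsigma < r_{low}/4$. I argue the two directions separately under this event.

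For the upper bound $D_{sp}(x,y) \preccurlyeq_{\lambda} D_f(x,y)$, I follow the construction from Proposition~\ref{pr:unweighted_sp}: partition the $f$-geodesic $\gamma^*$ from $x$ to $y$ into segments $u_0 = x, u_1, \ldots, u_{t+1} = y$ with $D_q(u_i, u_{i+1}) = r_{low} - 2\varsigma$, and use dense sampling to pick vertices $v_i$ with $D_q(u_i, v_i) \leq \varsigma$. The triangle inequality yields $D_q(v_i, v_{i+1}) \leq r_{low}$, so $(v_i, v_{i+1})$ is an edge of the $\knn$ graph. To bound its weight, Lemma~\ref{lem:approx} combined with the proximity of $v_i$ to $u_i$ gives $\|v_i - v_{i+1}\| \approx (r_{low} - 2\varsigma)/q(u_i) \approx r/q(u_i)$; Lipschitz continuity of $h$ then yields $h(\|v_i - v_{i+1}\|) \approx h(r/q(u_i)) = f(u_i)\cdot r/q(u_i) \approx f(u_i)\|u_i - u_{i+1}\|$, where the middle equality is the definition of $f$. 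Summing along the constructed graph path and using the geodesic identity $\sum_i D_f(u_i, u_{i+1}) = D_f(x,y)$ together with the small-segment approximation $D_f(u_i, u_{i+1}) \approx f(u_i)\|u_i - u_{i+1}\|$ (which follows from continuity of $f$) gives $D_{sp}(x,y) \preccurlyeq_{\lambda} D_f(x,y)$.

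For the lower bound $D_{sp}(x,y) \succcurlyeq_{\lambda} D_f(x,y)$, take the shortest path $z_0 = x, z_1, \ldots, z_s = y$ in the graph. Each $\knn$ edge satisfies $\|z_i - z_{i+1}\| \lesssim r_{up}/q(z_i) \approx r/q(z_i)$ from above. The key per-edge inequality is $h(\|z_i - z_{i+1}\|) \succcurlyeq_{\lambda} f(z_i)\|z_i - z_{i+1}\|$; this comes from subadditivity, because tiling an interval of length $r/q(z_i)$ with $\lceil r/(q(z_i)\|z_i - z_{i+1}\|) \rceil$ copies of length $\|z_i - z_{i+1}\|$ gives $h(r/q(z_i)) \leq \lceil r/(q(z_i)\|z_i - z_{i+1}\|) \rceil\, h(\|z_i - z_{i+1}\|)$, which rearranges (using the definition of $f$) to the desired bound up to a ceiling-induced factor that tends to $1$. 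Summing along the path and combining the triangle inequality $\sum_i D_f(z_i, z_{i+1}) \geq D_f(x, y)$ with the small-segment upper bound $D_f(z_i, z_{i+1}) \preccurlyeq_{\lambda} f(z_i)\|z_i - z_{i+1}\|$ then yields $D_{sp}(x, y) \succcurlyeq_{\lambda} D_f(x, y)$.

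The main obstacle is the lower bound, specifically the treatment of edges much shorter than the nominal length $r/q$. Nothing in the $\knn$ construction prevents two sampled points from being very close, and it is exactly for such short edges that the subadditivity-based inequality $h(\ell) \succcurlyeq_{\lambda} (\ell q/r)\, h(r/q)$ is invoked; showing that its ceiling-induced approximation factor vanishes in the limit requires the regime $\lambda \to 0$, $k \to \infty$, $k/n \to 0$. The rest of the argument must then carefully propagate the various Lipschitz error terms (in $p$, in $h$, and in the dense-sampling approximation) through a Riemann-sum-style estimate so that the product of these $1 + o(1)$ factors still tends to $1$ under the same regime as in Theorem~\ref{th:main}.
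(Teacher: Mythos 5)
Your high-probability event and your upper-bound direction coincide with the paper's proof: condition on Proposition~\ref{pr:r_low} and Lemma~\ref{lem:sampling}, cut the $f$-geodesic into pieces of $D_q$-length $r_{low}-2\varsigma$, snap to nearby sample points via dense sampling, and use Lipschitz continuity of $h$ together with the adapted Lemma~\ref{lem:approx} to conclude $D_{sp}(x,y)\preccurlyeq_{\lambda}D_f(x,y)$; this is exactly the argument in the paper.

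The lower bound is where you depart from the paper, and your route has a genuine gap. The per-edge inequality $h(\ell)\succcurlyeq_{\lambda}(\ell q/r)\,h(r/q)$ for $\ell\le r/q$ does not follow from subadditivity: the tiling argument only gives $h(\ell)\ge h(r/q)\big/\lceil r/(q\ell)\rceil$, and the ceiling factor does \emph{not} tend to $1$ when $\ell$ is a fixed fraction of $r/q$ (for $\ell\approx 0.6\,r/q$ the ceiling is $2$, so you get $h(\ell)\ge h(r/q)/2$ where you need $\ge 0.6\,h(r/q)$); no regime of $n,k,\lambda$ repairs this, because $\ell q/r$ need not go to zero along the shortest path. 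Contrary to your closing remark, the very short edges are the harmless ones (there the ceiling factor really is $1+o(1)$); it is the intermediate-length edges that break the bound. What your inequality actually requires is that $h(s)/s$ be (approximately) non-increasing, which is strictly stronger than subadditivity: take $h(t)=t$ on $[0,1]$, $h\equiv1$ on $[1,2]$, $h(t)=t-1$ on $[2,3]$, $h\equiv2$ beyond (add $\epsilon t$ to make it strictly increasing); this $h$ is increasing, Lipschitz and subadditive, yet $h(2)=1<\tfrac{2}{3}h(3)$, so your per-edge step fails by a constant factor, and as written your argument only yields $D_{sp}\ge(\tfrac{1}{2}-o(1))D_f$. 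The paper's proof avoids the per-edge route: it regards the graph shortest path $z_0,\dots,z_{s+1}$ as a continuous path, re-divides it into segments $u_0,\dots,u_{t+1}$ of $D_q$-length $\approx_{\lambda}r$, compares counts ($s\succcurlyeq_{\lambda}t$, since each edge has $D_q$-length $\preccurlyeq_{\lambda}r$), and applies subadditivity to the \emph{groups} of consecutive edges forming a segment, $\sum_{e\in\mathrm{group}\,j}h(\ell_e)\ge h(\|u_j-u_{j+1}\|)$, before passing to $\sum_j f(u_j)\|u_j-u_{j+1}\|\approx_{\lambda}\sum_j D_f(u_j,u_{j+1})\ge D_f(x,y)$. (The paper itself only sketches this direction, but the grouping of short edges into full-length segments is precisely the ingredient your per-edge argument is missing.)
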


\begin{proof}
The essence of the proof is similar to the one in Theorem
\ref{th:main}, we present a sketch only.  
The main step is to adapt Proposition \ref{pr:unweighted_sp} to
weighted graphs with weight function $h$. Adapting Lemma
\ref{lem:approx} for general $f$ is straightforward. The lemma states
that $D_f(x,y)\approx_{\lambda} f(x)\|x-y\|$ for nearby points. We set
$r_{low}$ and $\varsigma$ as in the sampling lemma and Proposition
\ref{pr:r_low} (these are properties of $\knn$ graphs and hold for any $f$). Proposition \ref{pr:r_low} says that in $\knn$ graphs, $x$ is connected to $y$ with high probability iff $\|x-y\| \preccurlyeq_{\lambda} r/q(x)$. The probabilistic argument and the criteria on choosing $\lambda$ are similar to Theorem \ref{th:main}.  

First we show that $D_{sp}(x,y) \preccurlyeq_{\lambda} D_f(x,y)$. 
Consider the $f$-geodesic path $\gamma^*_{x,y}$ connecting $x$ to $y$.
Divide $\gamma^*_{x,y}$ into segments $u_0 = x, u_1,...,u_{t},u_{t+1}=y$ such that $D_q(u_i,u_{i+1}) = r_{low}-2\varsigma$ for $i=0,...,t-1$ and $D_q(u_{t},u_{t+1}) \leq r_{low}-2\varsigma$ (see Figure \ref{fig:pathconstruction}). There exists a vertex $v_i$ near to $u_i$ such that $v_i$ and $v_{i+1}$ are connected. We show that the length of the path $x, v_1, ..., v_t, y$ is approximately smaller than $D_f(x,y)$. 
From the path construction we have 
\[
\|v_i-v_{i+1}\| \approx_{\lambda} \|u_i-u_{i+1}\| \approx_{\lambda} r/q(u_i).
\] 
By summing up along the path we get
\begin{eqnarray*}
D_{sp}(x,y) 
 & \leq & \textstyle{\sum_i{h(\|v_i-v_{i+1}\|)}} \\
 & \approx_{\lambda} & \textstyle{\sum_i{h(\|u_i-u_{i+1}\|)}}  \approx_{\lambda}  \textstyle{\sum_i{h(\frac{r}{q(u_i)})}} \\
 & = & \textstyle{\sum_i{f(u_i)\frac{r}{q(u_i)}}} \approx_{\lambda}  \textstyle{\sum_i{f(u_i)\|u_i-u_{i+1}\|}} 
\end{eqnarray*}
From the adaptation of Lemma \ref{lem:approx} we have
$D_f(u_i,u_{i+1})\approx_{\lambda} f(u_i)\|u_i-u_{i+1}\|$, which gives
\[
\textstyle{\sum_i{f(u_i)\|u_i-u_{i+1}\|}} \approx_{\lambda} \textstyle{\sum_i{D_f(u_i,u_{i+1})}} = D_f(x,y).
\]
This shows that $D_{sp}(x,y) \preccurlyeq_{\lambda} D_f(x,y)$. 

For the
other way round, we use a technique different from Proposition \ref{pr:unweighted_sp}. 
Denote the graph shortest path between $x$ and $y$ by $\pi:
z_0=x,z_1,...,z_s,z_{s+1}=y$. Consider $\pi'$ as a continuous path in
$\Xcal$ corresponding to $\pi$. As in the previous part, divide $\pi'$ into segments $u_0 = x, u_1,...,u_{t},u_{t+1}=y$ (see Figure \ref{fig:pathconstruction}). %Analogously construct the path $x, v_1, ..., v_t, y$. 
From $D_q(z_i,z_{i+1}) \preccurlyeq_{\lambda} r$ and $D_q(u_i,u_{i+1})
\approx_{\lambda} r$ we have $s \succcurlyeq_{\lambda} t$. Using this
and the subadditivity of $h$ we get
\begin{equation*}
\textstyle{D_{sp}(x,y) = \sum_i{h(\|z_i-z_{i+1}\|)} \succcurlyeq_{\lambda}  \sum_i{h(\|u_i-u_{i+1}\|)}.}
\end{equation*}
To prove  $D_{sp}(x,y) \succcurlyeq_{\lambda}
D_f(x,y)$, we can write
\begin{eqnarray*}
\textstyle{\sum_i{h(\|u_i-u_{i+1}\|)}} & \approx_{\lambda} & \textstyle{\sum_i{h(\frac{r}{q(u_i)})}} =  \textstyle{\sum_i{\frac{f(u_i)r}{q(u_i)}}} \\
& \approx_{\lambda} & \textstyle{\sum_i{f(u_i)\|u_i-u_{i+1}\|}} \\
& \approx_{\lambda} & \textstyle{\sum_i{D_f(u_i,u_{i+1})}} \geq  D_f(x,y).
\vspace{-10pt}
\end{eqnarray*}
\end{proof}

\vspace{-5pt}
%%%%%%%%%
The \textbf{proof of Theorem \ref{th:edge_weights}} is a direct
consequence of this theorem. It follows by choosing 
$ h(t) = t \tilde{f}( {r^d}/ {t^d})$ (which is subadditive if $\tilde
f$ is increasing) and setting $w_{ij} =
h(\|X_i-X_j\|)$. 

\vspace{-5pt}
\subsubsection{Supperadditive weights} 
A function $h$ is called superadditive if $\forall x,y\geq 0 :
h(x)+h(y)\leq h(x+y)$. Examples are $f(x)=x^a;a>1$ and $f(x)=xe^{x}$.
To get an intuition on the behavior of the $\SP$ for a superadditive $h$, take an
example of three vertices $x,y,z$ which are all connected in the graph
 and sit on a straight line such that $\|x-y\| + \|y-z\| = \|x-z\|.$ 
By the superadditivity, the $\SP$ between $x$ and $z$ will prefer going through $y$ rather
than directly jumping to $z$. More generally, the graph $\SP$ will prefer
taking many ``small'' edges rather than fewer ``long'' edges. For this reason, we do not
expect a big difference between superadditive weighted $\knn$ graphs
and $\epsilon$-graphs: the long edges in the $\knn$ graph
will not be used anyway. However, due to technical problems we did not
manage to prove a formal theorem to this effect. 

The special case of the superadditive family $h(x)=x^a$, $a>1$ is treated in \citet{Hwang2012} by completely different methods. Although their results are presented for complete graphs, we believe that it can be extended to $\epsilon$ and $\knn$ graphs.
We are not aware of any other result for the limit of $\SP$ distance
in the superadditive regime.

%\citet{Bijral11} find an approximation for the distance between $x_i$ and its nearest neighbor which is $c/q(x_i)$ with $c=(\frac{\log{2} }{n \eta_d})^{1/d}$. %Comparing this result and ours for subadditive weights suggests having an approximation in the form of $\alpha/q(x)$. 
%Correctness of this conjecture will imply the convergence of the $\SP$ distance to $f$-distance where $f(x) = h(\alpha/q(x)) q(x)/\alpha$. 
%Gaussian edge weights are used in many machine learning applications. 
%XXXX I SHOULD CHECK THE CORRECTNESS OF THIS PART AGAIN.
 % It is easy to verify that $f$ is a monotonically decreasing function. 
%If we use Gaussian weighting where $w_{ij}=exp(\|X_i-X_j\|/\sigma)$. In $\knn$ graphs it implies that the weighted shortest path converges to
%\[
%D_{p}(x,y) = \min_{\gamma} \frac{1}{r} \int_{\gamma}{p(\gamma(t))^{1/d} e^{\frac{\|r^2\|}{p(\gamma(t))^{2/d} \sigma }} |\gamma'(t)| dt}.
%\]

%	\begin{minipage}{.49\linewidth}
%		\includegraphics[width=\columnwidth]{figure/uniform_box.pdf}
%	\end{minipage}
%	\begin{minipage}{.49\linewidth}
%		\includegraphics[width=\columnwidth]{figure/isomap_uniform_box.pdf}
%	\end{minipage}
%	\caption{a) Sample points. b)Corresponding Isomap embedding based on an unweighted $\knn$ graph.}

\section{Consequences in applications}
\label{sec:conseq} 
In this section we study the consequences of our results on manifold embedding using Isomap and on a particular semi-supervised learning method. %The message is that we should be aware of the $\SP$ distance behavior when we use unweighted $\knn$ graphs for these applications.

There are two cases where we do not expect a drastic difference
between the $\SP$ in weighted and unweighted $\knn$ graph: (1) If the
underlying density $p$ is close to uniform. (2) If
the intrinsic dimensionality of our data $d$ is high. The latter is
because in the $q$-distance, the underlying density arises in
the form of $p(x)^{1/d}$, where the exponent flattens the distribution for large
$d$.
\vspace{-5pt}
\subsection{Isomap}
Isomap is a widely used method for low dimensional manifold embedding
\citep{TenSilLan00withproofs}. The main idea is to use metric
multidimensional scaling on the matrix of pairwise geodesic
distances. Using the Euclidean length of edges as their weights will
lead to the convergence of the $\SP$ distance to the geodesic distance. But what would be the effect of applying Isomap to unweighted graphs? 

Our results of the last section already hint that there is no big difference between unweighted and weighted $\epsilon$-graphs for Isomap. 
However, the case of $\knn$ graphs is different because weighted and unweighted shortest paths measure different quantities. %eaking, we can see the unweighted $\knn$  graph as an $\epsilon$-graph with respect to metric $D_f$. 
The effect of applying Isomap to unweighted $\knn$ graphs can easily
be demonstrated by the following simulation. We sample 2000 points
in $\R^2$ from a distribution that has two uniform high-density
squares, surrounded by a uniform low density region. An unweighted $\knn$ graph is constructed with
$k=10$, and we apply Isomap with target dimension 2. The result is
depicted in Figure \ref{fig:isomap_knn}. We can see that the Isomap
embedding heavily distorts the original data: it stretches high
density regions and compacts low density regions to make the vertex
distribution close to uniform. 
%Assume there is a transformation $T:\Xcal \rightarrow \mathbf{X'}$ such that for all $x,y \in \Xcal$ we have $\|T(x)-T(y)\| = D_q(x,y)$. 
%distance + isomap: show and explain how unweighted knn + isomaps makes points uniform. Briefly discuss: Are there perhaps examples where such a behavior might be desired? In what situations is the behavior undesired?
%look for literature on isomap, is this behavior explained somewhere in a paper?

%In general, Isomap on unweighted $\knn$ graphs has the tendency to distribute the points such that they are approximately uniformly distributed. If the original distribution is not uniform, it introduces a large bias.
\subsection{Semi-supervised learning}

Our work has close relationship to some of the literature on
semi-supervised learning (SSL). In regularization based approaches,
the underlying density is either exploited implicitly as attempted in
Laplacian regularization (\citealp{ZhuGhaLaf03} but see 
\citealp{NadSreZho09,AlaLux11} and \citealp{ZhouB11}),  or more explicitly as in measure based regularization \citep{BouChaHei03}.
Alternatively, one defines new distance functions on the data that take the density of the unlabeled points into account.
Here, the papers by \citet{Sajama2005} and \citet{Bijral11} are most related to our paper. Both papers suggest different ways to approximate the density based distance from the data. In \citet{Sajama2005} it is achieved by estimating the underlying density while in \citet{Bijral11}, the authors omit the density estimation and use an approximation. 

Our work shows a simpler way to converge to a similar distance
function for a specific family of $f$-distances, namely constructing a $\knn$ graph and assigning edge
weights as in Equation \ref{eq:weight_w}.
%for a given density based distance function $f(x) = \tilde{f}(p(x))$.
%
%In direct regularization we need an increasing function for weights to penalize high density regions, but in distance based regularization we need a decreasing $f$ to prefer passing through high density regions.

\section{Conclusions and outlook} 

We have seen in this paper that the shortest path distance on
unweighted $\knn$ graphs has a very funny limit behavior: it prefers
to go through regions of low density and even takes large detours in
order to avoid the high density regions. In hindsight, this result seems
obvious, but most people are surprised when they first
hear about it. In particular, we believe that it is important to
spread this insight among machine learning practitioners, who
routinely use unweighted $\knn$-graphs as a simple, robust alternative
to $\eps$-graphs.

In some sense, unweighted $\eps$-graphs and unweighted $\knn$ graphs
behave as ``duals'' of each other: while
degrees in $\eps$-graphs reflect the underlying density, they are independent of the density in $\knn$
graphs. While the shortest path in
$\eps$-graphs is independent of the underlying density and converges
to the Euclidean distance, the shortest paths in $\knn$
graphs take the density into account. 
%Additionally, if we consider
%weighted $\knn$ graphs with edge weights of the form XX GAUSSIAN WITH
%POSITIVE SIGN XXX, these graphs
%tend to behave like unweighted $\eps$-graphs, whereas edge weights of the form XX GAUSSIAN WITH
%NEGATIVE SIGN XX
%lead to a behavior that is closer to unweighted $\knn$ graphs. 

%We believe that is important that the machine learning
%community becomes aware of these differences. 
Current practice
is to use $\eps$ and $\knn$ graphs more or less interchangeably in
many applications, and the decision for one or the other graph is
largely driven by robustness or convenience considerations. However,
as our results show it is important to be aware of the implicit 
consequences of this choice.  Each graph carries different information about the underlying
density, and depending on how a particular machine learning algorithms makes use
of the graph structures, it might either miss out or benefit from this
information.

% discuss a bit: what kind of properties are preserved in what kind of
% graph: * eps graph: density preserved (in degrees), geometry of points
% preserved (in shortest paths) * knn graph: tries to make everything
% uniform (degrees uniform, shortest paths behave a bit strange) *
% weighted knn graphs preserves everything we want: use negative
% gaussian weights leads to correct degree estimate, use positive
% gaussian weights leads to Euclidean distances ( is it true that if we
% choose the correct rates to make the degrees converge to the density,
% that then also the shortest paths wrt inverse weights converge to the
% Euclidean distance? or would we need different assumptions? )

% outlook: positive interpretation of the unweighted knn result in sampling/covering situations: state that properties of $D_f$ make it interesting for covering: sometimes we want to cover such that all balls have the same Euclidean distance, but often we want to cover such that all balls have the same probability mass. can achieve this with unweighted knn graphs. we are working on this.

\bibliography{knn_icml2012,ules_publications,general_bib}
\bibliographystyle{icml2012}

\end{document}